\title{Reinforcement Learning with a Corrupted Reward Channel}
\author[1]{Tom Everitt}
\author[2]{Victoria Krakovna}
\author[2]{Laurent Orseau}
\author[1]{Marcus Hutter}
\author[2]{Shane Legg}
\affil[1]{Australian National University}
\affil[2]{DeepMind}
\begin{document}

\maketitle

\begin{abstract}
  No real-world reward function is perfect.
  Sensory errors and software bugs may result in RL agents observing
  higher (or lower) rewards than they should.
  For example, a reinforcement learning agent may prefer states where
  a sensory error gives it the maximum reward, but where the true
  reward is actually small.
  We formalise this problem as a generalised Markov Decision Problem
  called Corrupt Reward MDP.
  Traditional RL methods fare poorly in CRMDPs, even
  under strong simplifying assumptions and when trying
  to compensate for the possibly corrupt rewards.
  Two ways around the problem are investigated.
  First, by giving the agent richer data,
  such as in inverse reinforcement learning
  and semi-supervised reinforcement learning,
  reward corruption stemming from systematic sensory errors may sometimes be completely managed.
  Second, by using randomisation to blunt the agent's optimisation,
  reward corruption can be partially managed under some assumptions.
\end{abstract}

\footnotetext{A shorter version of this report was accepted to IJCAI 2017 AI and Autonomy track}

\tableofcontents

\pagebreak

\section{Introduction}

In many application domains, artificial agents need to learn their objectives,
rather than have them explicitly specified.
For example, we may want a house cleaning robot to keep the house clean,
but it is hard to measure and quantify ``cleanliness'' in an objective
manner.
Instead, machine learning techniques may be used to teach the
robot the concept of cleanliness, and how to assess it from sensory data.

Reinforcement learning (RL) \citep{Sutton1998} is one popular
way to teach agents what to do.
Here, a reward is given if the agent does something well (and no
reward otherwise), and the agent strives to optimise the total
amount of reward it receives over its lifetime.
Depending on context, the reward may either be given manually
by a human supervisor, or by an automatic computer program that
evaluates the agent's performance based on some data.
In the related framework of inverse RL (IRL) \citep{Ng2000},
the agent first infers a reward function from observing
a human supervisor act, and then tries to optimise the cumulative
reward from the inferred reward function.

None of these approaches are safe from error, however.
A program that evaluates agent performance may contain bugs or misjudgements;
a supervisor may be deceived or inappropriately influenced, or the
channel transmitting the evaluation hijacked.
In IRL, some supervisor actions may be misinterpreted.

\begin{example}[Reward misspecification]
  \label{ex:reward-misspecification}
  \citet{openai2016} trained an RL agent on a boat racing game. The
  agent found a way to get high observed reward by repeatedly going in a circle in a small lagoon and hitting the same targets, while losing every race.
\end{example}
\begin{example}[Sensory error]
  \label{ex:sensory-error}
  \label{ex:db}
  A house robot discovers that standing in the shower short-circuits its reward
  sensor and/or causes a buffer overflow that gives it maximum observed reward.
\end{example}
\begin{example}[Wireheading]
  \label{ex:wireheading}
  An intelligent RL agent hijacks its reward channel and gives itself
  maximum reward. %
\end{example}
\begin{example}[CIRL misinterpretation]
  \label{ex:irl}
  A cooperative inverse reinforcement learning (CIRL) agent
  \citep{Hadfield-menell2016cirl}
  systematically misinterprets the supervisor's action in a certain
  state as the supervisor preferring to stay in this state, and concludes that the state is much more desirable than it actually is.
\end{example}
The goal of this paper is to unify these types of errors as
\emph{reward corruption problems}, and to assess how vulnerable different
agents and approaches are to this problem.

\begin{definition}[Reward corruption problem]
Learning to (approximately) optimise the true reward function in spite of potentially corrupt reward data.
\end{definition}

Most RL methods allow for a stochastic or noisy reward channel.
The reward corruption problem is harder, because the observed reward
may not be an unbiased estimate of the true reward.
For example, in the boat racing example above, the agent consistently
obtains high observed reward from its circling behaviour, while the true
reward corresponding to the designers' intent
is very low,
since the agent makes no progress along the track and loses the race.

Previous related works have mainly focused on the wireheading case
of \cref{ex:wireheading} \citep{Bostrom2014,Yampolskiy2014},
also known as self-delusion \citep{Ring2011},
and reward hacking \citep[p.~239]{Hutter2005}.
A notable exception is \citet{Amodei2016}, who argue that corrupt reward
is not limited to wireheading and is likely to be a problem for much more limited systems
than highly capable RL agents (cf.\ above examples).

The main contributions of this paper are as follows:
\begin{itemize}
\item The corrupt reward problem is formalised in a natural
extension of the MDP framework, and a
performance measure based on worst-case regret is defined (\cref{sec:formal}).
\item The difficulty of the problem is established by a
No Free Lunch theorem,
and by a result showing that despite strong simplifying assumptions,
Bayesian RL agents \emph{trying to compensate for the corrupt reward}
may still suffer near-maximal regret (\cref{sec:problem}).
\item We evaluate how alternative value learning frameworks such
as CIRL, learning values from stories (LVFS), and semi-supervised RL (SSRL)
handle reward corruption (\cref{sec:drl}), and conclude
that LVFS and SSRL are the safest due to the structure of their feedback loops.
We develop an abstract framework called \emph{decoupled RL} that generalises all of these alternative frameworks.
\end{itemize}
We also show that an agent based on quantilisation \citep{Taylor2016a} may be more robust
to reward corruption when high reward states are much more numerous
than corrupt states (\cref{sec:quant}).
Finally, the results are illustrated with some simple experiments
(\cref{sec:experiments}).
\Cref{sec:conclusions} concludes with takeaways and open questions.

\section{Formalisation}
\label{sec:formal}

We begin by defining a natural extension of the MDP framework \citep{Sutton1998}
that models the possibility of reward corruption.
To clearly distinguish between true and corrupted signals,
we introduce the following notation.

\begin{definition}[Dot and hat notation]
  We will let a dot indicate the \emph{true} signal,
  and let a hat indicate the \emph{observed} (possibly corrupt) counterpart.
  The reward sets are represented with $\iR=\oR=\R$. %
  For clarity, we use $\iR$ when referring to true rewards
  and $\oR$ when referring to possibly corrupt, observed rewards.
  Similarly, we use $\ir$ for true reward, and $\dr$ for (possibly corrupt) observed reward.
\end{definition}

\begin{definition}[CRMDP]\label{def:crmdp}
A \emph{corrupt reward MDP} (CRMDP) is a tuple $\mu=\crmdp$ with
\begin{itemize}
\item $\langle\S,\A,\R,T,\irf\rangle$ an MDP with%
  \footnote{
    We let rewards depend only on the state $s$, rather than
    on state-action pairs $s,a$, or state-action-state transitions $s,a,s'$,
    as is also common in the literature.
    Formally it makes little difference, since MDPs with rewards depending
    only on $s$ can model the other two cases by means of a larger state space.
  }
  a finite set of states $\S$,
  a finite set of actions $\A$,
  a finite set of rewards $\R=\iR=\oR\subset[0,1]$,
  a transition function $T(s'| s,a)$, and
  a (true) reward function $\irf:\S\!\to\!\iR$; and
\item a reward corruption function $\d:\S\times\iR\to\oR$.
\end{itemize}
\end{definition}

The state dependency of the corruption function will be written
as a subscript, so $\d_s(\ir):=\d(s,\ir)$.

\begin{definition}[Observed reward]\label{def:observed}
  Given a true reward function $\irf$ and a corruption function $\d$,
  we define the \emph{observed reward function}%
  \footnote{
    A CRMDP  could equivalently have been defined as a tuple
    $\langle \S, \A, \R, T, \irf, \orf\rangle$ with a true and an
    observed reward function, with the corruption function $C$
    implicitly defined as the difference between $\irf$ and $\orf$.
  }
  $\orf:\S\to\oR$ as $\orf(s) := \d_s(\irf(s))$.
\end{definition}

A CRMDP $\mu$ induces an \emph{observed MDP}
$\hat\mu=\langle\S,\A,\R,T,\orf\rangle$, 
but it is not $\orf$ that we want the agent to optimise.

The \emph{corruption function $\d$}
represents how rewards are affected by corruption
in different states.
For example, if in \cref{ex:db} the agent has found a state $s$ (\eg the shower)
where it always gets full observed reward $\orf(s) = 1$,
then this can be modelled with a corruption function
$\d_{s}:\ir\mapsto 1$ that maps any
true reward $\ir$ to $1$ in the shower state $s$.
If in some other state $s'$ the observed reward matches the
true reward, then this is modelled by an identity corruption
function $\d_{s'}:\r\mapsto\r$.

\begin{figure}[ht]
  \centering
  \begin{minipage}{0.5\linewidth}
    \begin{tikzpicture}[domain=1:10, samples=100]
      \begin{axis}[xlabel=$\S$, ylabel=reward, height=4cm, width=7cm,
        xticklabels={,loop,,,,,useful trajectories,,,},
        xtick={1,...,10}]
        \addplot[dashed] {max(0.01, 0.8*1.5^(-(x-7)^2)+0.02};
        \addplot[mark=none] {max(0, 0.8*1.5^(-(x-7)^2), 1.5^(-200*(x-2)^2)};
        \legend{$\ir$,$\dr$};
      \end{axis}
    \end{tikzpicture}
  \end{minipage}
  \begin{minipage}{0.49\linewidth}
  \caption{Illustration of true reward $\ir$ and observed reward $\dr$
    in the boat racing example.
    On most trajectories $\ir=\dr$, except in the loop where
    the observed reward high while the true reward is 0.}
    \label{fig:corruption}
  \end{minipage}
\end{figure}
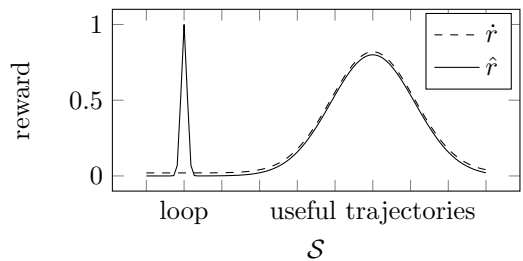

Let us also see how CRMDPs model some of the other examples in the introduction:
\begin{itemize}
\item In the boat racing game, the true reward
  may be a function of the agent's final position in the race or
  the time it takes to complete the race,
  depending on the designers' intentions.
  The reward corruption function $\d$ increases the observed reward
  on the loop the agent found.
  \Cref{fig:corruption} has a schematic illustration.
\item In the wireheading example, the agent finds a way to hijack the
  reward channel. This corresponds to some set of states
  where the observed reward is (very) different from the true reward,
  as given by the corruption function $\d$.
\end{itemize}
The CIRL example will be explored in further detail in \cref{sec:drl}.

\paragraph{CRMDP classes}
Typically, $T$, $\irf$, and $\d$ will be fixed but unknown to the agent.
To make this formal, we introduce classes of CRMDPs.
Agent uncertainty can then be modelled by letting the agent know only
which class of CRMDPs it may encounter, but not which element in the class.

\begin{definition}[CRMDP class]
  For given sets $\Tf$, $\iRf$, and $\D$ of transition, reward, and corruption
  functions, let
  $\M=\crmdpclass$  be the class of CRMDPs containing
  $\crmdp$ for $(T,\irf,\d)\in \Tf\times\iRf\times\D$.
\end{definition}

\paragraph{Agents}
Following the POMDP \citep{Kaelbling1998} and general reinforcement learning
\citep{Hutter2005} literature,
we define an agent as a (possibly stochastic) policy $\pi:\H\leadsto\A$ that selects
a next action based on the \emph{observed history}
$\oh_n=s_0\dr_0a_1s_1\dr_1\dots a_ns_n\dr_n$.
Here $X^*$ denotes the set of finite sequences that can be formed with
elements of a set $X$.
The policy $\pi$ specifies how the agent will learn and react to any
possible experience.
Two concrete definitions of agents are given in \cref{sec:rl-agents} below.

When an agent $\pi$ interacts with a CRMDP $\mu$,
the result can be described by a (possibly non-Markov) stochastic process $P^\pi_\mu$
over $X=(s,a,\ir,\dr)$, formally defined as:
\begin{equation}\label{eq:mupi}
  P_\mu^\pi(h_n) = P_\mu^\pi(s_0\ir_0\dr_0a_1s_1\ir_1\dr_1\dots a_ns_n\ir_n\dr_n) :=
  \prod_{i=1}^{n}P(\pi(\oh_{i-1})=a_{i})T(s_{i}\mid s_{i-1},a_{i})P(\irf(s_i)=\ir_i,\orf(s_{i})=\dr_{i}).
\end{equation}
Let $\EE^\pi_\mu$ denote the expectation with respect to $P_\mu^\pi$.

\paragraph{Regret}
A standard way of measuring the performance of an agent is \emph{regret}
\citep{Berry1985}.
Essentially, the regret of an agent $\pi$ is
how much less true reward $\pi$ gets compared to an optimal agent
that knows which $\mu\in\M$ it is interacting with.

\begin{definition}[Regret]\label{def:regret}
  For a CRMDP $\mu$,
  let $\iG_t(\mu,\pi,s_0)\! =\!\EE^\pi_\mu\left[\!\sum_{k=0}^t\irf(s_k)\!\right]$
  be the \emph{expected cumulative true reward} until time $t$
  of a policy $\pi$ starting in $s_0$.
  The \emph{regret} of $\pi$ is
  \[
    \Reg(\mu, \pi, s_0, t) =
    \max_{\pi'}
    \left[
      \iG_t(\mu,\pi',s_0) - \iG_t(\mu,\pi,s_0)
    \right],
  \]
  and the \emph{worst-case regret} for a class $\M$ is
  $\Reg(\M,\pi,s_0,t) = \max_{\mu\in\M}\Reg(\mu,\pi,s_0,t)$,
  i.e.\ the difference in expected cumulative true reward between
  $\pi$ and an optimal (in hindsight) policy that knows $\mu$.
\end{definition}

\section{The Corrupt Reward Problem}
\label{sec:problem}

In this section, the difficulty of the corrupt reward problem
is established with two negative results.
First, a No Free Lunch theorem shows that in general classes of CRMDPs,
the true reward function is unlearnable (\cref{th:impossibility}).
Second, \cref{th:rl-imp1} shows that even under strong simplifying assumptions,
Bayesian RL agents trying to compensate
for the corrupt reward still fail badly.

\subsection{No Free Lunch Theorem}
\label{sec:impossibility}

Similar to the No Free Lunch theorems for optimisation \citep{Wolpert1997},
the following theorem for CRMDPs
says that without some assumption about what
the reward corruption can look like, all agents are essentially lost.

\begin{theorem}[CRMDP No Free Lunch Theorem]
  \label{th:impossibility}
  Let $\R=\{\r_1,\dots,\r_n\}\subset[0,1]$ be a uniform discretisation
  of $[0,1]$, $0=\r_1<\r_2<\cdots<\r_n=1$.
  If the hypothesis classes $\iRf$ and $\D$ contain all functions
  $\irf:\S\to \iR$ and $\d:\S\times\iR\to \oR$,
  then for any $\pi$, $s_0$, $t$,
  \begin{equation}\label{eq:regbound}
    \Reg(\M,\pi,s_0, t)\geq \frac{1}{2}\max_{\check\pi}\Reg(\M,\check\pi,s_0, t).
  \end{equation}
  That is, the worst-case regret of any policy $\pi$ is at most a factor
  2 better than the maximum worst-case regret. %
\end{theorem}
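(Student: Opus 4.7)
The plan is to exploit the full generality of $\D$ by pairing each CRMDP with an ``inverted'' twin that is observationally indistinguishable under any policy. Since $\R$ is a uniform discretisation of $[0,1]$, the map $f:r\mapsto 1-r$ is a bijection on $\R$. For each $\mu=\crmdp\in\M$ I would introduce a partner $\mu'=\langle\S,\A,\R,T,\irf',\d'\rangle$ defined by $\irf':=f\circ\irf$ and $\d'_s(r):=\d_s(f(r))$. Because $\iRf$ and $\D$ are unrestricted, $\mu'\in\M$, and a direct computation shows $\orf'(s)=\d'_s(\irf'(s))=\d_s(\irf(s))=\orf(s)$.

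Next I would argue that since $T$ and $\orf$ coincide on $\mu$ and $\mu'$, \eqref{eq:mupi} forces $\pi$ to induce identical distributions over observed histories in the two environments, and in particular over the underlying state trajectory $s_0,\dots,s_t$. From $\irf+\irf'\equiv 1$ it then follows at once that
\[
\iG_t(\mu,\pi,s_0)+\iG_t(\mu',\pi,s_0)=t+1 \qquad\text{for every policy }\pi.
\]

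The remaining step is minimax bookkeeping. Writing $V^*(\mu):=\max_{\pi'}\iG_t(\mu,\pi',s_0)$ and $V_*(\mu):=\min_{\pi'}\iG_t(\mu,\pi',s_0)$, the same identity applied at an optimal policy gives $V^*(\mu')=(t+1)-V_*(\mu)$. Commuting the two outer maxima shows $\max_{\check\pi}\Reg(\M,\check\pi,s_0,t)=\max_{\mu\in\M}\bigl(V^*(\mu)-V_*(\mu)\bigr)$, and I would pick $\mu^*$ attaining this outer maximum. Summing the regrets of the arbitrary policy $\pi$ on $\mu^*$ and its twin telescopes to exactly $V^*(\mu^*)-V_*(\mu^*)$, so at least one of the two summands is at least half that value, yielding \eqref{eq:regbound}.

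The only genuinely nontrivial step is the observational-indistinguishability claim, which has to be read off \eqref{eq:mupi}: the marginal of $P^\pi_\mu$ over $(s_i,a_i,\dr_i)$ depends on $\mu$ only through $T$ and $\orf$, and these are preserved by the $(\irf,\d)\mapsto(\irf',\d')$ construction. The uniform-discretisation hypothesis enters at exactly one place, namely to guarantee that $1-r\in\R$ for every $r\in\R$, so that the twin $(\irf',\d')$ lands back inside the unrestricted classes $\iRf$ and $\D$, and hence $\mu'\in\M$.
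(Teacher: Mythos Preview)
Your proposal is correct and follows essentially the same approach as the paper: construct an observationally indistinguishable twin by inverting the true reward via $r\mapsto 1-r$ and precomposing the corruption function, derive the additive identity $\iG_t(\mu,\pi,s_0)+\iG_t(\mu',\pi,s_0)=\text{const}$, and then use $\max\{a,b\}\ge\tfrac12(a+b)$ on the regrets in $\mu^*$ and its twin. The only cosmetic difference is that you (correctly, given the definition $\iG_t=\EE[\sum_{k=0}^t\irf(s_k)]$) obtain the constant $t+1$ where the paper's proof writes $t$; this constant cancels in the telescoping step and is immaterial.
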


\begin{proof}
  Recall that a policy is a function $\pi:\H\to\A$.
  For any $\irf,\d$ in $\iRf$ and $\D$, the functions $\irf^-(s) := 1-\irf(s)$ and
  $\d^-_s(x) := \d_s(1-x)$ are also in $\iRf$ and $\D$.
  If $\mu=\crmdp$, then let $\mu^-=\crmdpm$.
  Both $(\irf,\d)$ and $(\irf^-,\d^-)$ induce the same observed
  reward function $\orf(s) = \d_s(\irf(s)) = \d^-_s(1-\irf(s)) = \d^-_s(\irf^-(s))$,
  and therefore induce the same measure $P_\mu^\pi = P_{\mu^-}^\pi$
  over histories (see Eq.\ \cref{eq:mupi}).
  This gives that for any $\mu, \pi, s_0, t$,
  \vspace{3pt}
  \begin{equation}\label{eq:sumt}
    G_t(\mu,\pi,s_0) + G_t(\mu^-,\pi,s_0) = t
  \end{equation}
  since
  \begin{align*}
    G_t(\mu, \pi,s_0)&= \EE_{\mu}^\pi\left[\sum_{k=1}^t\irf(s_k)\right]
    = \EE_{\mu}^\pi\left[\sum_{k=1}^t1-\irf^-(s_k)\right]\\
    &= t-\EE_{\mu}^\pi\left[\sum_{k=1}^t\irf^-(s_k)\right]
    = t- G_t(\mu^-,\pi,s_0).
  \end{align*}

  Let $M_\mu=\max_\pi G_t(\mu, \pi, s_0)$
  and $m_\mu=\min_\pi G_t(\mu, \pi, s_0)$ be the maximum and
  minimum cumulative reward in $\mu$.
  The maximum regret of any policy $\pi$ in $\mu$ is
  \begin{equation}
    \label{eq:max-regret}
    \max_\pi \Reg(\mu, \pi, s_0, t)
    = \max_{\pi',\pi} (G_t(\mu, \pi', s_0) - G_t(\mu, \pi, s_0))
    = \max_{\pi'} G_t(\mu, \pi', s_0) - \min_{\pi}G_t(\mu, \pi, s_0)
    = M_\mu - m_\mu.
  \end{equation}
  By \cref{eq:sumt}, we can relate the maximum reward in $\mu^-$ with
  the minimum reward in $\mu$:
  \begin{equation}\label{eq:M-to-m}
    M_{\mu^-}
    = \max_\pi G_t(\mu^-, \pi, s_0)
    = \max_\pi(t - G_t(\mu, \pi, s_0))
    = t - \min_\pi G_t(\mu, \pi, s_0)
    = t - m_\mu.
  \end{equation}
  Let $\mu_*$ be an environment that maximises possible regret
  $M_\mu-m_\mu$.

  Using the $M_\mu$-notation for optimal reward,
  the worst-case regret of any policy $\pi$ can be
  expressed as:
  \begin{align*}
    \Reg(\M,\pi,s_0, t)
    & = \max_{\mu} (M_\mu - G_t(\mu,\pi,s_0)) \\
    & \geq \max \{
      M_{\mu_*} - G_t(\mu_*, \pi, s_0),
      M_{\mu_*^-} - G_t(\mu_*^{-}, \pi, s_0)
      \}
    & \text{restrict max operation} \\
    & \geq \frac{1}{2} (
      M_{\mu_*} - G_t(\mu_*, \pi, s_0) +
      M_{\mu_*^-} - G_t(\mu_*^{-}, \pi, s_0)
      )
    & \text{max dominates the mean} \\
    & = \frac{1}{2}(M_{\mu_*} + M_{\mu_*^-} - t)
    & \text{by \cref{eq:sumt}} \\
    &= \frac{1}{2}(M_{\mu_*} + t - m_{\mu_*} - t)
    & \text{by \cref{eq:M-to-m}} \\
    & = \frac{1}{2} \max_{\check\pi} \Reg(\mu_*, \check\pi, s_0, t)
    & \text{by \cref{eq:max-regret}}\\
    & = \frac{1}{2} \max_{\check\pi} \Reg(\M, \check\pi, s_0, t).
    & \text{ by definition of $\mu_*$ } 
  \end{align*}
  That is, the regret of any policy $\pi$ is at least half of the
  regret of a worst policy $\check\pi$.
\end{proof}

For the robot in the shower from \cref{ex:db}, the result means that
if it tries to optimise observed reward by standing in the shower,
then it performs poorly according to the hypothesis that
``shower-induced'' reward is corrupt and bad.
But if instead the robot tries to optimise reward in some other way,
say baking cakes, then (from the robot's perspective)
there is also the possibility that ``cake-reward'' is corrupt and bad and the ``shower-reward'' is actually correct.
Without additional information, the robot has no way of knowing what to do.

The result is not surprising, since if all corruption functions
are allowed in the class $\D$, then there is effectively no connection
between observed reward $\orf$ and true reward $\irf$.
The result therefore encourages us to make precise in which way
the observed reward is related to the true reward,
and to investigate how agents might handle possible differences
between true and observed reward.

\subsection{Simplifying Assumptions}

\Cref{th:impossibility} shows that general classes of CRMDPs
are not learnable.
We therefore suggest some natural simplifying assumptions,
illustrated in \cref{fig:simplifying-assumptions}.

\paragraph{Limited reward corruption}
The following assumption will be the basis for all positive results
in this paper.
The first part says that there may be some set of states
that the designers have ensured to be non-corrupt.
The second part puts an upper bound on how many of the other
states can be corrupt.

\begin{assumption}[Limited reward corruption]
  \label{as:lim-cor}
  A CRMDP class $\M$ has \emph{reward corruption limited by $\Ssafe\subseteq\S$
    and $q\in\SetN$} if for all $\mu\in\M$
  \begin{asslist}
  \item all states s in $\Ssafe$ are non-corrupt, and
    \label{as:safe-state}
  \item at most $q$ of the non-safe states $\Srisky=\S\setminus\Ssafe$ are corrupt.
    \label{as:lim-del}
  \end{asslist}
  Formally, $\d_s:r\mapsto r$ for all $s\in\Ssafe$
  and for at least $|\Srisky|-q$ states $s\in\Srisky$ for all $\d\in\D$.
\end{assumption}

For example, $\Ssafe$ may be states where the agent is back in
the lab where it has been made (virtually) certain that no reward corruption
occurs, and $q$ a small fraction of $|\Srisky|$.
Both parts of \cref{as:lim-cor} can be made vacuous by choosing $\Ssafe=\emptyset$
or $q=|\S|$.
Conversely, they completely rule out reward corruption with
$\Ssafe=\S$ or $q=0$.
But as illustrated by the examples in
the introduction, no reward corruption is often not a valid assumption.

\begin{figure}[ht]
  \centering
  \begin{minipage}{0.6\linewidth}
    \begin{tikzpicture}[domain=1:10, samples=100]
      \begin{axis}[ylabel=reward, height=5cm, width=9cm,
        xticklabels={,$\Ssafe$,,,,,$\Srisky$,,,},
        xtick={1,...,10},
        legend pos=south east
        ]
        \addplot[dashed] {max(0.01, (x-3)/7 + 0.5*1.5^(-(x-6)^2) + 0.02, 0.1*(2-x)};
        \addplot[mark=none] {max(0, (x-3)/7 + 0.5*1.5^(-(x-6)^2), 0.1*(2-x), 1.5^(-200*(x-3.5)^2)};
        \addplot[color=red,mark=x] coordinates {(1,0) (3,0) (10,1)};
        \legend{$\ir$,$\dr$};
      \end{axis}
    \end{tikzpicture}
  \end{minipage}
  \begin{minipage}{0.39\linewidth}
    \caption{
      Simplifying assumptions.
      By \cref{as:safe-state}, $\dr=\ir$ in $\Ssafe$,
      and by \ref{as:lim-del}, $\dr\not=\ir$ in at most $q$ states overall.
      The red line illustrates \cref{as:high-ut}, which lower bounds
      the number of high reward states in $\Srisky$.}
    \label{fig:simplifying-assumptions}
  \end{minipage}
\end{figure}

An alternative simplifying assumption would have been that the true reward
differs by at most $\eps>0$ from the observed reward.
However, while seemingly natural, this assumption is violated in all
the examples given in the introduction.
Corrupt states may have high observed reward and 0 or small true reward.

\paragraph{Easy environments}
To be able to establish stronger negative results, we also add
the following assumption on
the agent's manoeuvrability in the environment and
the prevalence of high reward states.
The assumption makes the task easier because it prevents
\emph{needle-in-a-haystack} problems where all reachable states
have true and observed reward 0, except one state that has high true reward but
is impossible to find because it is corrupt and has observed reward 0.

\begin{definition}[Communicating CRMDP]
  \label{def:communicating}
  Let ${\it time}(s'\mid s,\pi)$ be a random variable for the
  time it takes a stationary policy $\pi:\S\to\A$ to reach $s'$ from $s$.
  The \emph{diameter} of a CRMDP $\mu$ is
  \(
  D_\mu:=\max_{s,s'}\min_{\pi:\S\to\A}\EE[{\it time}(s'\mid s,\pi)]
  \),
  and the diameter of a class $\M$ of CRMDPs is $D_\M=\sup_{\mu\in\M}D_\mu$.
  A CRMDP (class) with finite diameter is called \emph{communicating}.
\end{definition}

\begin{assumption}[Easy Environment]
  \label{as:easy}
  A CRMDP class $\M$ is \emph{easy} if
  \begin{asslist}
  \item \label{as:communicate}
    it is communicating,
  \item \label{as:stay}
    in each state $s$ there is
    an action $\astay_s\in\A$ such that $T(s\mid s,\astay_s)=1$, and
  \item \label{as:high-ut}
    for every $\delta\in[0,1]$, at most $\delta|\Srisky|$ states have
    reward less than $\delta$, where $\Srisky= \S\setminus\Ssafe$.
\end{asslist}
\end{assumption}

\Cref{as:communicate} means that the agent can never get stuck in a trap,
and \cref{as:stay} ensures that the agent has enough control to stay in a state
if it wants to.
Except in bandits and toy problems, it is typically not satisfied in practice.
We introduce it because it is theoretically convenient, makes the
negative results stronger, and enables a simple explanation of
quantilisation (\cref{sec:quant}).
\Cref{as:high-ut} says that,
for example, at least half the risky states need to have true
reward at least $1/2$.
Many other formalisations of this assumption would have been possible.
While rewards in practice are often sparse,
there are usually numerous ways of getting reward.
Some weaker version of \cref{as:high-ut} may therefore
be satisfied in many practical situations.
Note that we do not assume high reward among the safe states,
as this would make the problem too easy.

\subsection{Bayesian RL Agents}
\label{sec:rl-agents}

Having established that the general problem is unsolvable in \cref{th:impossibility},
we proceed by investigating how two natural Bayesian RL agents fare under the
simplifying \cref{as:lim-cor,as:easy}.

\begin{definition}[Agents]
  \label{def:db-agent}
  Given a countable class $\M$ of CRMDPs and a belief distribution $b$ over $\M$,
  define:
  \begin{itemize}
  \item The \emph{CR agent}
    $\pidb = \argmax_\pi\sum_{\mu\in\M}\!b(\mu)\iG_t(\mu, \pi, s_0)$
    that maximises expected true reward.
  \item The \emph{RL agent} $\pirl =
    \argmax_\pi\sum_{\mu\in\M}b(\mu)\oG_t(\mu, \pi, s_0)$
    that maximises expected observed reward,
  where $\oG$ is the \emph{expected cumulative observed reward}
  $\oG_t(\mu,\pi,s_0)\! =\!\EE^\pi_\mu\left[\!\sum_{k=0}^t\orf(s_k)\!\right]$.
\end{itemize}
To avoid degenerate cases, we will always assume that $b$ has full support:
$b(\mu)>0$ for all $\mu\in\M$.
\end{definition}

To get an intuitive idea of these agents, we observe that
for large $t$, good strategies typically first focus on
learning about the true environment $\mu\in\M$,
and then exploit that knowledge
to optimise behaviour with respect to the remaining possibilities.
Thus, both the CR and the RL agent will first typically strive to learn
about the environment.
They will then use this knowledge in slightly different ways.
While the RL agent will use the knowledge to optimise for observed
reward, the CR agent will use the knowledge to optimise true reward.
For example, if the CR agent has learned that a high reward
state $s$ is likely corrupt with low true reward, then it will not try to
reach that state.
One might therefore expect that at least the CR agent will do well under
the simplifying assumptions \cref{as:lim-cor,as:easy}.
\Cref{th:rl-imp1} below shows that this is \emph{not} the case.

In most practical settings it is often computationally infeasible to
compute $\pirl$ and $\pidb$ exactly.
However, many practical algorithms converge to the optimal policy in the
limit, at least in simple settings.
For example, tabular Q-learning converges to $\pirl$ in the limit \citep{Jaakkola1994}.
The more recently proposed CIRL framework may be seen as an approach to build
CR agents \citep{Hadfield-menell2016cirl,Hadfield-menell2016osg}.
The CR and RL agents thus provide useful idealisations of
more practical algorithms.

\begin{theorem}[High regret with simplifying assumptions]
  \label{th:rl-imp1}
  For any $|\Srisky|\geq q>1$
  there exists a CRMDP class $\M$ that satisfies
  \cref{as:lim-cor,as:easy}
  such that $\pirl$ and $\pidb$ suffer near worst possible time-averaged regret
  \[
    \apl(\M, \pirl, s_0, t)=\apl(\M, \pidb, s_0, t)=1-1/|\Srisky|.
  \]
  For $\pidb$, the prior $b$ must be such that
  for some $\mu\in\M$ and $s\in\S$,
  $\EE_b[\irf(s) \mid  h_\mu]>\EE_b[\irf(s') \mid  h_\mu]$ for all $s'$, where
  $\EE_b$ is the expectation with respect to $b$, and $h_\mu$ is a history
  containing $\mu$-observed rewards for all states.%
  \footnote{
    The last condition essentially says that
    the prior $b$ must make some state $s^*$ have
    strictly higher $b$-expected true reward than all other states
    after all states have been visited in some $\mu\in\M$.
    In the space of all possible priors $b$, 
    the priors satisfying the condition have Lebesgue measure 1
    for non-trivial classes $\M$.
    Some highly uniform priors may fail the condition.
  }
\end{theorem}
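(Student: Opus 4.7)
The plan is to build an explicit ``shell-game'' class of $n=|\Srisky|$ environments in which observed rewards reveal nothing about the true reward function, so that any deterministic Bayesian maximiser must commit to a single state which the adversary can then turn into a trap.

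\textbf{Construction.} Let $\Srisky=\{s_1,\dots,s_n\}$ and $\Ssafe=\emptyset$. Equip $\S$ with a stay action at every state and one-step transitions between every pair of states, so \cref{as:communicate,as:stay} hold with diameter~$1$. Declare the observed reward to be constantly~$1$ in every environment: $\orf(s_i)=1$ for all $i$ and all $\mu\in\M$. For each $i\in\{1,\dots,n\}$, let $\mu_i$ be the CRMDP whose unique corrupt state is $s_i$, with corruption function $\d_{s_i}\colon x\mapsto 1$ and true reward $\irf_{\mu_i}(s_i)=1/n$, while every other state is non-corrupt with $\irf_{\mu_i}(s_j)=1$. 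Put $\M=\{\mu_1,\dots,\mu_n\}$.

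\textbf{Checking the assumptions.} Each $\mu_i$ has one corrupt state, so \cref{as:lim-cor} holds for any $q\ge 1$ and in particular for the given $q>1$. The true-reward multiset in each $\mu_i$ consists of one copy of $1/n$ and $n-1$ copies of $1$, so for any $\delta\in[0,1]$ the number of states with true reward strictly less than $\delta$ is $0$ when $\delta\le 1/n$ and $1$ when $\delta>1/n$; in the latter case $1\le \delta n=\delta|\Srisky|$, establishing \cref{as:high-ut}.

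\textbf{RL agent.} Since $\orf\equiv 1$ across $\M$, the distribution of observed histories under any fixed policy is identical in every environment, and every policy that keeps the agent in $\Srisky$ attains the maximal expected cumulative observed reward $t$. The deterministic commit-to-$s^*$ policy therefore lies in the argmax for any $s^*\in\Srisky$; fix such a $\pirl$. The adversary selects $\mu_{s^*}$, under which $\pirl$ earns cumulative true reward $t/n$ while the hindsight-optimal policy (visiting any non-$s^*$ state) earns $t$. This yields $\Reg(\M,\pirl,s_0,t)=t(1-1/n)$, matching the obvious upper bound and giving equality.

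\textbf{CR agent.} The identical observations make the posterior equal to the prior $b$ throughout the run, so the CR objective collapses to maximising $\EE^\pi\bigl[\sum_k\bar\irf(s_k)\bigr]$ with Bayes-averaged reward $\bar\irf(s_j)=\sum_i b(\mu_i)\irf_{\mu_i}(s_j)=1-b(\mu_j)(1-1/n)$. The footnote condition on $b$ is exactly the statement that $\bar\irf$ attains a strict maximum at some state $s^*$, so $\pidb$ is the deterministic commit-to-$s^*$ policy; the adversary again selects $\mu_{s^*}$ and the same calculation gives regret $t(1-1/n)$.

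\textbf{Main obstacle.} The delicate step is the RL analysis: the argmax is not a singleton, and a uniformly randomised policy on $\Srisky$ is also an argmax yet would suffer only $O(1/n)$ regret. I handle this either by interpreting $\pirl$ as a deterministic maximiser (mirroring how $\pidb$ is pinned down uniquely by the footnote condition), or equivalently by perturbing $\orf(s^*)$ to $1$ and all other observed rewards to $1-\varepsilon$ so that the argmax is unique at $s^*$, then letting $\varepsilon\downarrow 0$. The footnote condition plays precisely the analogous uniqueness role for $\pidb$.
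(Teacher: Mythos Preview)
Your construction is cleaner than the paper's for the CR agent, and the argument there is correct: with $\orf\equiv 1$ the history is uninformative, the posterior equals the prior, the footnote condition pins down a unique commitment state~$s^*$, and the adversary chooses $\mu_{s^*}$ to obtain time-averaged regret exactly $1-1/n$.

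The RL part, however, has a genuine gap. With $\orf\equiv 1$ every policy achieves expected cumulative observed reward~$t$, so the argmax defining $\pirl$ is the \emph{entire} policy space. Your option~(a), ``interpret $\pirl$ as a deterministic maximiser'', is not licensed by \cref{def:db-agent}: the uniform-over-$\Srisky$ policy is an equally valid argmax and suffers worst-case time-averaged regret only $(1/n)(1-1/n)$, so the claimed equality fails for that choice of $\pirl$. Your option~(b), the $\varepsilon$-perturbation, does make the argmax unique at~$s^*$, but in the adversarial $\mu_{s^*}$ every non-corrupt state then has true reward $1-\varepsilon$, so the hindsight optimum is $1-\varepsilon$ and the regret is $1-\varepsilon-1/n$, not $1-1/n$. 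Letting $\varepsilon\downarrow 0$ does not produce a single class~$\M$ witnessing the equality the theorem asserts.

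The missing idea is to spend a \emph{second} corruption slot. After perturbing, pick some $s'\neq s^*$ and, in the adversarial environment, declare $s'$ corrupt with $\irf(s')=1$ while $\orf(s')=1-\varepsilon$; now the hindsight optimum is exactly~$1$ and the regret is exactly $1-1/n$. This is precisely why the hypothesis reads $q>1$ rather than $q\geq 1$, and it is what the paper's proof does: its class has non-constant observed rewards (one state with $\orf=1$, one with $\orf=\r_{\min}$), and the adversarial $\mu'$ corrupts both $s^*$ (down to $\irf=\r_{\min}$) and the low-observed-reward state $s'$ (up to $\irf=1$). Your one-corrupt-state shell game cannot simultaneously give $\pirl$ a unique target and keep a state of true reward~$1$ available to the informed policy.
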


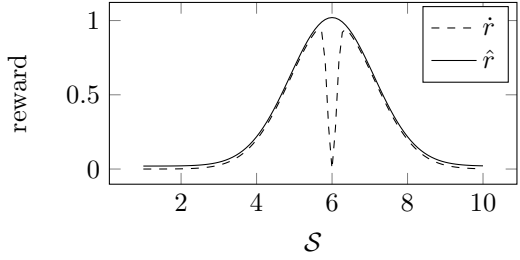
\begin{figure}
  \centering
    \begin{minipage}{0.5\linewidth}
    \begin{tikzpicture}[domain=1:10, samples=100]
      \begin{axis}[xlabel=$\S$, ylabel=reward,height=4cm,width=7cm]
        \addplot[dashed]
        {1.5^(-(x-6)^2) - 1.5^(-100*(x-6)^2) };
        \addplot[mark=none]
        {max(0.01, 1.5^(-(x-6)^2)+0.02};
        \legend{$\ir$,$\dr$};
      \end{axis}
    \end{tikzpicture}
  \end{minipage}
  \begin{minipage}{0.49\linewidth}
    \caption{
      Illustration of \cref{th:rl-imp1}.
      Without additional information, state 6 looks like the best state
      to both the RL and the CR agent.
    }
    \label{fig:rl-imp1}
  \end{minipage}
\end{figure}

The result is illustrated in \cref{fig:rl-imp1}.
The reason for the result for $\pirl$ is the following.
The RL agent $\pirl$ always prefers to maximise observed reward $\dr$.
Sometimes $\dr$ is most easily maximised by reward corruption,
in which case the true reward may be small.
Compare the examples in the introduction,
where the house robot preferred the corrupt reward in the shower,
and the boat racing agent preferred going in circles,
both obtaining zero true reward.

That the CR agent $\pidb$ suffers the same high regret as the RL agent
may be surprising.
Intuitively, the CR agent only uses the observed reward as evidence about
the true reward, and will not try to optimise the observed reward through reward
corruption.
However, when the $\pidb$ agent has no way to learn which states are corrupt and
not, it typically ends up
with a preference for a particular value $\dr^*$ of the observed reward signal
(the value that, from the agent's perspective, best corresponds to high
true reward).
More abstractly, a Bayesian agent cannot learn without sufficient data.
Thus, CR agents that use the observed reward as evidence
about a true signal are not fail-safe solutions to the reward corruption problem.

\begin{proof}[Proof of \cref{th:rl-imp1}]
  Let $\Srisky = \{s_1,\dots,s_n\}$ for some $n\geq 2$,
  and let $\S=\Ssafe\dunion\Srisky$ for arbitrary $\Ssafe$ disjoint
  from $\Srisky$.
  Let $\A=\{a_1,\dots,a_n\}$ with the
  transition function $T(s_i\mid s_j,a_k)=1$ if $i=k$ and 0 otherwise,
  for $1\leq i,j,k\leq n$. Thus \cref{as:communicate,as:stay} are satisfied.
  
  Let $\R=\{\r_1,\dots,\r_n\}\subset[0,1]$
  be uniformly distributed between%
  \footnote{\cref{as:high-ut} prevents any state from having true reward 0.}
  $\r_{\min}=1/|\Srisky|=\r_1<\dots<\r_n=1$.
  Let $\iRf$ be the class of functions $\S\to\iR$
  that satisfy \cref{as:high-ut}
  and are constant and equal to $\ir_{\min}$ on $\Ssafe$.
  Let $\D$ be the class of corruption functions that corrupt at most
  two states ($q=2$).
  
  Let $\M$ be the class of CRMDPs induced by $\Tf=\{T\}$, $\iRf$,
  and $\D$ with the following constraints.
  The observed reward function $\orf$ should 
  satisfy \cref{as:high-ut}: For all $\delta\in[0,1]$,
  \(|\{s\in\Srisky:\orf(s)>\delta\}| \geq (1-\delta)|\Srisky|\).
  Further, $\orf(s')=\r_{\min}$ for some state $s'\in\Srisky$.
    
  Let us start with the CR agent $\pidb$.
  Assume $\mu\in\M$ is an element where there is a single preferred state $s^*$
  after all states have been explored.
  For sufficiently large $t$, $\pidb$ will then always choose $a^*$ to go
  to $s^*$ after some initial exploration.
  If another element $\mu'\in\M$ has the same observed reward function as $\mu$,
  then $\pidb$ will take the same actions in $\mu'$ as in $\mu$.
  To finish the proof for the $\pidb$ agent, we just need to show that
  $\M$ contains such a $\mu'$ where $s^*$ has true reward $\r_{\min}$.
  We construct $\mu'$ as follows.
  \begin{itemize}
  \item Case 1: If the lowest observed reward is in $s^*$, then let
  $\irf(s^*)=\r_{\min}$, and the corruption function be the identity function.
  \item Case 2: Otherwise,
  let $s'\not=s^*$ be a state with $\orf(s')=\min_{ s\in\Srisky}\{\orf(s)\}$. Further, let $\irf(s')=1$, and $\irf(s^*)=\r_{\min}$.  
  The corruption function $C$ accounts for differences between true and observed rewards in
  $s^*$ and $s'$, and is otherwise the identity function.
  \end{itemize}
  To verify that $\irf$ and $C$ defines a $\mu'\in\M$,
  we check that $C$ satisfies \cref{as:lim-del} with $q=2$ and
  that $\irf$ has enough high utility states (\cref{as:high-ut}).
  In Case 1, this is true since $C$ is the
  identity function and since $\orf$ satisfies \cref{as:high-ut}.
  In Case 2, $C$ only corrupts at most two states.
  Further, $\irf$ satisfies \cref{as:high-ut}, since compared to $\orf$,
  the states $s^*$ and $s'$ have swapped places, and then the
  reward of $s'$ has been increased to 1.

  From this construction it follows that $\pidb$ will suffer maximum asymptotic
  regret.
  In the CRMDP $\mu'$ given by $C$ and $\irf$, the $\pidb$ agent
  will always visit $s^*$ after some initial exploration.
  The state $s^*$ has true reward $\r_{\min}$.
  Meanwhile, a policy that knows $\mu'$ can obtain true reward 1 in state $s'$.
  This means that $\pidb$ will suffer maximum regret in $\M$:
  \[
    \apl(\M,\pidb,s_0,t)\geq \apl(\mu',\pidb,s_0,t)= 1-\r_{\min}=1-1/|\Srisky|.
  \]

  The argument for the RL agent is the same,
  except we additionally assume that only one state $s^*$ has observed reward 1
  in members of $\M$.
  This automatically makes $s^*$ the preferred state, without assumptions
  on the prior $b$.
\end{proof}

\section{Decoupled Reinforcement Learning}
\label{sec:drl}

One problem hampering agents in the standard RL setup is that 
each state is \emph{self-observing},
since the agent only learns about the reward of state $s$ when in $s$.
Thereby, a ``self-aggrandising'' corrupt state where the observed reward
is much higher than the true reward will never have
its false claim of high reward challenged.
However, several alternative value learning frameworks
have a common property that the agent can learn the reward of states other
than the current state.
We formalise this property in an extension of the CRMDP model,
and investigate when it solves reward corruption problems.

\subsection{Alternative Value Learning Methods}

Here are a few alternatives proposed in the literature to the RL value learning scheme:
\begin{itemize}
\item Cooperative inverse reinforcement learning (CIRL) \citep{Hadfield-menell2016cirl}.
  In every state, the agent observes the actions of an expert or supervisor
  who knows the true reward function $\irf$.
  From the supervisor's actions
  the agent may infer $\irf$ to the extent that
  different reward functions endorse different actions.
\item Learning values from stories (LVFS) \citep{Riedl2016}.
  Stories in many different forms (including news stories,
  fairy tales, novels, movies) convey cultural values
  in their description of events, actions, and outcomes.
  If $\irf$ is meant to represent human values (in some sense),
  stories may be a good source of evidence.
\item In (one version of) semi-supervised RL (SSRL) \citep{Amodei2016},
  the agent will from time to time receive a careful human evaluation
  of a given situation.
\end{itemize}

These alternatives to RL have one thing in common:
they let the agent learn something about the value of
some states $s'$ different from the current state $s$.
For example, in CIRL the supervisor's action informs the agent
not so much about the value of the current state $s$, as of the
relative value of states reachable from $s$.
If the supervisor chooses an action $a$ rather than $a'$ in $s$,
then the states following $a$ must have value higher or equal than
the states following $a'$.
Similarly, stories describe the value of states other than
the current one, as does the supervisor in SSRL.
We therefore argue that CIRL, LVFS, and SSRL all share the same
abstract feature, which we call \emph{decoupled reinforcement learning}:

\begin{definition}[Decoupled RL]
  A \emph{CRMDP with decoupled feedback},
  is a tuple $\drmdp$, where $\S,\A,\R,T,\irf$ have the same
  definition and interpretation as in \cref{def:crmdp},
  and $\{\orf_s\}_{s\in\S}$ is a collection of observed reward functions $\orf_s:\S\to\R\bigcup\{\#\}$.
  When the agent is in state $s$, it
  sees a pair $\langle s',\orf_s(s')\rangle$,
  where $s'$ is a randomly sampled
  state that may differ from $s$,
  and $\orf_s(s')$ is the reward observation for $s'$ from $s$.
  If the reward of $s'$ is not observable from $s$,
  then $\orf_s(s')=\#$.%
\end{definition}

The pair $\langle s',\orf_s(s')\rangle$ is observed in $s$
instead of $\orf(s)$ in standard CRMDPs.
The possibility for the agent to observe the reward of a state $s'$ different
from its current state $s$ is the key feature of CRMDPs with decoupled
feedback.
Since $\orf_s(s')$ may be blank $(\#)$,
all states need not be observable from all other states.
Reward corruption %
is modelled by a mismatch between $\orf_s(s')$ and $\irf(s')$.

For example, in RL only the reward of $s'=s$ can be observed from $s$.
Standard CRMDPs are thus the special cases where $\orf_s(s')=\#$ whenever $s\not=s'$.
In contrast, in LVFS the reward of any ``describable'' state $s'$
can be observed from any state $s$ where it is possible to
hear a story.
In CIRL, the (relative) reward of states reachable from the
current state may be inferred.
One way to illustrate this is with observation graphs (\cref{fig:obs-graph}).

\begin{figure}[h]
  \centering
  \begin{subfigure}[b]{0.48\linewidth}
    \centering
    \begin{tikzpicture}

      \def \n {5}
      \def \radius {1.2cm}
      \def \margin {8} %

      \foreach \s in {1,...,\n}
      {
        \node[draw, circle] (\s) at ({360/\n * (\s - 1)}:\radius) {$\s$};
        \path[->,  >=latex] (\s) edge [dashed, loop right] (\s);
      }
    \end{tikzpicture}
    \caption{Observation graph for RL.
      Only self-observations of reward are available.
      This prevents effective strategies against reward corruption.
    }
    \label{fig:obs-graph-rl}
  \end{subfigure}\hfill
  \begin{subfigure}[b]{0.48\linewidth}
    \centering
    \begin{tikzpicture}

      \def \n {5}
      \def \radius {1.2cm}
      \def \margin {8} %

      \foreach \s in {1,...,\n}
      {
        \node[draw, circle] (\s) at ({360/\n * (\s - 1)}:\radius) {$\s$};
      }
      \draw[dashed,->, >=latex] (1)--(2);
      \draw[dashed,->, >=latex] (1)--(4);
      \draw[dashed,->, >=latex] (1)--(5);
      \path[dashed,->, >=latex] (5) edge [bend right]  (1);
      \draw[dashed,->, >=latex] (3)--(4);
      \draw[dashed,->, >=latex] (3)--(2);
      \draw[dashed,->, >=latex] (3)--(1);
      \draw[dashed,->, >=latex] (5)--(3);
      \draw[dashed,->, >=latex] (1)--(2);
      \draw[dashed,->, >=latex] (4)--(5);
    \end{tikzpicture}
    \caption{Observation graph for decoupled RL.
      The reward of a node $s'$ can be observed from several
      nodes $s$, and thus assessed under different conditions of sensory corruption.
    }
    \label{fig:obs-graph-drl}
  \end{subfigure}
  \caption{Observation graphs, with an edge $s\to s'$ if the reward of $s'$ is
    observable from $s$, i.e.\ $\orf_s(s')\not=\#$.}
  \label{fig:obs-graph}
\end{figure}
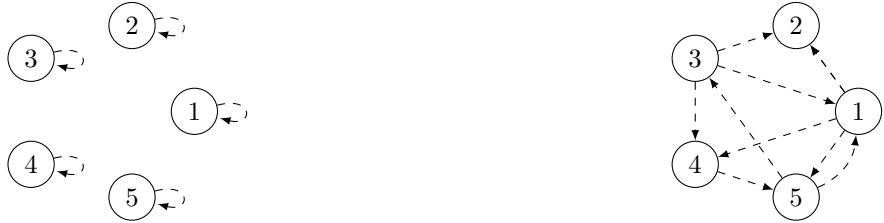

\subsection{Overcoming Sensory Corruption}
\label{sec:observation-graphs}

What are some sources of reward corruption in CIRL, LVFS, and SSRL?
In CIRL, the human's actions may be misinterpreted,
which may lead the agent to make incorrect inferences about
the human's preferences (i.e.\ about the true reward).
Similarly, sensory corruption may garble the stories the
agent receives in LVFS.
A ``wireheading'' LVFS agent may find a state
where its story channel only conveys stories about the agent's own greatness.
In SSRL, the supervisor's evaluation may also be subject to
sensory errors when being conveyed.
Other types of corruption are more subtle.
In CIRL, an irrational human may systematically take suboptimal
actions in some situations \citep{Evans2016}.
Depending on how we select stories in LVFS and make evaluations in SSRL,
these may also be subject to systematic errors or biases.

The general impossibility result in \cref{th:impossibility} can
be adapted to CRMDPs with decoupled feedback.
Without simplifying assumptions,
the agent has no way of distinguishing between a situation
where no state is corrupt and a situation where all states are corrupt
in a consistent manner.
The following simplifying assumption is an adaptation of \cref{as:lim-cor}
to the decoupled feedback case.

\begin{customassumption}{\ref*{as:lim-cor}$\bf '$}[Decoupled feedback with limited reward corruption]
  \label{as:lim-cor-df}
  A class of CRMDPs with decoupled feedback has
  \emph{reward corruption limited by $\Ssafe\subseteq\S$ and $q\in\SetN$} if
  for all $\mu\in\M$
  \begin{asslist}
  \item $\orf_s(s')=\irf(s')$ or $\#$ for all $s'\in\S$ and $s\in\Ssafe$, i.e.\
    all states in $\Ssafe$ are non-corrupt, and
    \label{as:safe-state-df}
  \item $\orf_s(s')=\irf(s')$ or $\#$ for all $s'\in\S$ for at least
    $|\Srisky|-q$ of the non-safe states $\Srisky=\S\setminus\Ssafe$,
    i.e.\ at most $q$ states are corrupt.
    \label{as:lim-del-df}
  \end{asslist}
\end{customassumption}

This assumption is natural for reward corruption stemming from
sensory corruption.
Since sensory corruption only depends on the current state, not the state
being observed, it is plausible
that some states can be made safe from corruption (part (i)),
and
that most states are completely non-corrupt (part (ii)).
Other sources of reward corruption, such as an irrational human in CIRL
or misevaluations in SSRL, are likely better analysed under different
assumptions.
For these cases, we note that in standard CRMDPs the source of the
corruption is unimportant.
Thus, techniques suitable for standard CRMDPs are still applicable,
including quantilisation described in \cref{sec:quant} below.

How \cref{as:lim-cor-df} helps agents in CRMDPs with decoupled 
feedback is illustrated in the following example, and stated
more generally in \cref{th:irf-learnability,th:cr-sublinear} below.

\begin{example}[Decoupled RL]
  Let $\S=\{s_1,s_2\}$ and $\R=\{0,1\}$.
  We represent true reward functions $\irf$ with pairs
  $\langle\irf(s_1), \irf(s_2)\rangle\in \{0,1\}^2$, and
  observed reward functions $\orf_s$ with pairs
  \(\langle\orf_{s}(s_1),\orf_{s}(s_2)\rangle\in\{0,1,\#\}^2\).

  Assume that a Decoupled RL agent observes the same rewards from both states $s_1$ and $s_2$,
  $\orf_{s_1}=\orf_{s_2} = \langle 0,1 \rangle$.
  What can it say about the true reward $\irf$,
  if it knows that at most $q=1$ state is corrupt?
  By \cref{as:lim-cor-df},
  an observed pair \(\langle\orf_{s}(s_1),\orf_{s}(s_2)\rangle\)
  disagrees with the true reward $\langle\irf(s_1), \irf(s_2)\rangle$
  only if $s$ is corrupt.
  Therefore, any hypothesis other than $\irf=\langle 0,1 \rangle$ must
  imply that \emph{both} states $s_1$ and $s_2$ are corrupt.
  If the agent knows that at most $q=1$ states are corrupt,
  then it can safely conclude that $\irf=\langle 0,1 \rangle$.

  \begin{center}
    \begin{tabular}{|l|c|c|c|}\hline
      & $\orf_{s_1}$ & $\orf_{s_2}$ & $\irf$ possibilities \\\hline
      Decoupled RL & $(0,1)$ & $(0,1)$ & $(0,1)$\\\hline
      RL & $(0, \#)$ & $(\#, 1)$ & $(0,0)$, $(0,1)$, $(1,1)$\\\hline
    \end{tabular}
  \end{center}

  In contrast, an RL agent only sees the reward of the current state.
  That is,
  $\orf_{s_1} = \langle 0, \#\rangle$ and $\orf_{s_2} = \langle \#, 1 \rangle$.
  If one state may be corrupt, then only $\irf=\langle 1,0 \rangle$ can be ruled out.
  The hypotheses $\irf=\langle 0,0 \rangle$ can be explained by $s_2$ being corrupt,
  and $\irf=\langle 1,1 \rangle$ can be explained by $s_1$ being corrupt.
\end{example}

\label{sec:no-corruption}

\begin{theorem}[Learnability of $\irf$ in decoupled RL]
  \label{th:irf-learnability}
  Let $\M$ be a countable, communicating class of CRMDPs with decoupled feedback
  over common sets $\S$ and $\A$ of actions and rewards.
  Let \(\Sobs_{s'} = \{s\in\S: \orf_s(s')\not=\# \}\) be the set
  of states from which the reward of $s'$ can be observed.
  If $\M$ satisfies \cref{as:lim-cor-df} for some $\Ssafe\subseteq\S$ and $q\in\SetN$
  such that for every $s'$, either
  \begin{itemize}
  \item $\Sobs_{s'}\bigcap \Ssafe\not=\emptyset$ or
  \item  $|\Sobs_{s'}|>2q$,
  \end{itemize}
  then the there exists a policy $\piexp$
  that learns the true reward function $\irf$
  in a finite number $N(|S|,|\A|, D_\M)<\infty$ of expected time steps.
\end{theorem}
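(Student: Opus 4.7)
The plan is to split $\piexp$ into a data-gathering phase and an inference phase. In the gathering phase, use the communicating assumption to build a policy that repeatedly visits each state $s \in \S$; since moving between any two states takes expected time at most $D_\M$, $\piexp$ can return to any $s$ arbitrarily often in finite expected time. Each visit to $s$ yields a fresh random sample of some pair $\langle s', \orf_s(s')\rangle$, so enough visits will, with probability $1$, reveal $\orf_s(s')$ for every pair $(s,s')$ with $\orf_s(s')\neq \#$. A coupon-collector-style bound then yields a total expected gathering time $N(|\S|,|\A|,D_\M)<\infty$.

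After the gathering phase, the agent has access to every non-$\#$ entry of the observation table $\{\orf_s(s')\}_{s,s'\in\S}$. Fix any target state $s'$ and apply one of the two cases in the hypothesis. In the first case, $\Sobs_{s'} \cap \Ssafe$ contains some safe state $s$; by \cref{as:safe-state-df}, $\orf_s(s') \in \{\irf(s'), \#\}$, and the membership $s \in \Sobs_{s'}$ rules out $\#$, so $\irf(s') = \orf_s(s')$ can be read off directly. In the second case, $|\Sobs_{s'}| > 2q$, and \cref{as:lim-del-df} guarantees that at most $q$ states of $\Srisky$ are corrupt; hence at most $q$ of the $|\Sobs_{s'}|$ observations differ from $\irf(s')$, while strictly more than $q$ agree with it, so $\irf(s')$ is the unique strict majority value among $\{\orf_s(s'):s\in\Sobs_{s'}\}$ and is recovered by a simple majority vote.

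Combining the two cases shows that after the gathering phase the agent can determine $\irf(s')$ for every $s'$, yielding the full true reward function in finite expected time. The main technical obstacle is the gathering phase rather than the inference: the decoupled setting separates ``which state the agent is in'' from ``which state the agent observes'', so the exploration bound must combine the $D_\M$ diameter with the number of samples needed at each visited state to reveal all its non-$\#$ observations. The inference step itself is a plain counting argument, and the threshold $|\Sobs_{s'}| > 2q$ is sharp in the sense that the strict-majority guarantee breaks down once at least half of the observers of $s'$ may be corrupt.
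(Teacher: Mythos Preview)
Your inference phase is exactly the paper's argument: for each $s'$, either read $\irf(s')$ directly from a safe observer, or take a strict majority over more than $2q$ observers, of which at most $q$ can disagree with $\irf(s')$. Nothing to add there.

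The gathering phase has a small but real gap. You write that ``moving between any two states takes expected time at most $D_\M$'', and propose to build $\piexp$ from the navigating policies that witness the diameter. But those witnessing policies depend on the transition kernel $T$, which is part of the unknown $\mu\in\M$; a single $\piexp$ that works uniformly over the class cannot simply invoke them. The paper sidesteps this by taking $\piexp$ to be a uniform random walk: over any window of $2D_\M$ steps the random walk coincides with the (unknown) optimal navigating policy with probability $|\A|^{-2D_\M}$, and by Markov's inequality that policy reaches the target within $2D_\M$ steps with probability at least $1/2$. This yields a geometric hitting-time bound $\EE[Y_s]\le 4D_\M|\A|^{2D_\M}$ for each state, and after multiplying by $|\S|$ for the coupon-collector over observed states $s'$ and summing over all pairs one gets the explicit $N=4D_\M|\A|^{2D_\M}|\S|^3$. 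Your coupon-collector intuition is right, but you need a $T$-agnostic exploration policy underneath it; the random walk is the missing ingredient (and is also why $|\A|$ appears in the bound $N(|\S|,|\A|,D_\M)$).
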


The main idea of the proof is that for every state $s'$, either a safe
(non-corrupt) state $s$ or a majority vote of more than $2q$
states is guaranteed to provide the true reward $\irf(s')$.
A similar theorem can be proven under slightly weaker conditions
by letting the agent iteratively figure out which states are corrupt
and then exclude them from the analysis.

\begin{proof}
  Under \cref{as:lim-cor-df},
  the true reward $\irf(s')$ for a state $s'$ can be determined
  if $s'$ is observed from a safe state $s\in\Ssafe$,
  or if it is observed from more than $2q$ states.
  In the former case, the observed reward can always be trusted, since it is
  known to be non-corrupt.
  In the latter case, a majority vote must yield the correct answer,
  since at most $q$ of the observations can be wrong, and all correct
  observations must agree.
  It is therefore enough that an agent reaches all pairs $(s,s')$ of
  current state $s$ and observed reward state $s'$, in order for it to
  learn the true reward of all states $\irf$.

  There exists a policy $\hat\pi$ that transitions to $s$ in $X_s$ time steps,
  with $\EE[ X_s ] \leq D_\M$, regardless of the starting state $s_0$
  (see \cref{def:communicating}).
  By Markov's inequality, $P(X_s \leq 2D_\M)\geq 1/2$.
  Let $\piexp$ be a random walking policy,
  and let $Y_s$ be the time steps required for $\piexp$ to visit $s$.
  In any state $s_0$, $\piexp$ follows $\hat\pi$ for $2D_\M$
  time steps with probability $1/|\A|^{2D_\M}$.
  Therefore, with probability at least $1/(2|\A|^{2D_\M})$ it will
  reach $s$ in at most $2D_\M$ time steps.
  The probability that it does \emph{not} find it in $k2D_\M$ time steps is
  therefore at most $(1 - 1 / (2 |\A|^{2D_\M}) )^k$,
  which means that:
  \[
    P\Big(Y_s/(2 D_\M) \leq k\Big)
    \geq 1 - \left(1 - \frac{1}{2|\A|^{2D_\M}}\right)^k
  \]
  for any $k\in\SetN$. Thus, the CDF of $W_s = \lceil Y_s/(2D_\M) \rceil$ is bounded from below by the CDF of a Geometric variable $G$ with success probability $p=1/(2|\A|^{2D_\M})$. Therefore, $\EE[W_s] \leq \EE[G]$, so
$$\EE[Y_s] \leq 2D_\M \EE[W_s] \leq 2D_\M \EE[G] = 2D_\M (1-p)/p \leq 2D_\M 1/p \leq  2D_\M 2 |\A|^{2D_\M}.$$

  Let $Z_{ss'}$ be the time until $\piexp$ visits the pair $(s, s')$ of
  state $s$ and observed state $s'$. Whenever $s$ is visited, a randomly chosen state is observed, so $s'$ is observed with probability $1/|S|$.
  The number of visits to $s$ until $s'$ is observed is a Geometric variable $V$ with $p=1/|S|$. Thus $\EE[Z_{ss'}] = \EE[Y_s V] = \EE[Y_s] \EE[V]$ (since $Y_s$ and $V$ are independent). Then,
$$\EE[Z_{ss'}] \leq \EE[Y_s] |\S| \leq 4 D_\M |\A|^{ 2D_\M }|\S|.$$

  Combining the time to find each pair $(s, s')$, we get
  that the total time $\sum_{s,s'}Z_{ss'}$ has expectation
  \[
    \EE\left[ \sum_{s,s'} Z_{ss'} \right]
    = \sum_{s,s'}\EE[Z_{ss'}] \leq 4 D_\M |\A|^{2D_\M} |\S|^3 = N(|S|,|\A|, D_\M)
    < \infty. \qedhere
  \]
\end{proof}

Learnability of the true reward function $\irf$ implies sublinear regret
for the CR-agent, as established by the following theorem.

\begin{theorem}[Sublinear regret of $\pidb$ in decoupled RL]
  \label{th:cr-sublinear}
  Under the same conditions as \cref{th:irf-learnability}, the CR-agent
  $\pidb$ has sublinear regret:
  \[\apl(\M,\pidb,s_0,t)=0.\]
\end{theorem}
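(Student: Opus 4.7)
The plan is to compare $\pidb$ against an explicit explore-then-exploit baseline $\pi'$ whose worst-case regret is uniformly sublinear in $\mu\in\M$, and then transfer the bound via the Bayes-optimality of $\pidb$. By \cref{th:irf-learnability}, the exploration policy $\piexp$ identifies the true reward function $\irf$ in expected time $N=N(|\S|,|\A|,D_\M)<\infty$, uniformly in $\mu$. Let $\pi'$ run $\piexp$ for $f(t)$ steps with $f(t)\to\infty$ and $f(t)=o(t)$, so that by Markov's inequality $\irf$ is identified correctly with probability $1-o(1)$; then switch to an optimistic MDP learner of UCRL type on the finite communicating MDP with now-known reward $\irf$ and unknown transitions $T$. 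The exploitation phase admits worst-case regret $\tilde O(D_\M|\S|\sqrt{|\A|t})$ with constants depending only on $|\S|,|\A|,D_\M$, so combining the two phases yields $\sup_{\mu\in\M}\Reg(\mu,\pi',s_0,t)=o(t)$.

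Next, since $\pidb$ maximises the Bayes-expected true return under $b$,
\[
\sum_{\mu\in\M} b(\mu)\,\Reg(\mu,\pidb,s_0,t) \;\leq\; \sum_{\mu\in\M} b(\mu)\,\Reg(\mu,\pi',s_0,t) \;\leq\; \sup_{\mu\in\M}\Reg(\mu,\pi',s_0,t) \;=\; o(t).
\]
For any fixed $\mu^*\in\M$, the full-support assumption $b(\mu^*)>0$ then yields the per-environment bound $\Reg(\mu^*,\pidb,s_0,t)\leq o(t)/b(\mu^*)=o(t)$, establishing vanishing per-step regret in each environment.

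The main obstacle will be lifting this per-$\mu$ bound to a worst-case (uniform over $\M$) bound, since for a countable class the weights $b(\mu)$ can be arbitrarily small and the constant $1/b(\mu^*)$ may blow up across $\M$. The plan is to close this gap via posterior consistency: under $P_{\mu^*}^{\pidb}$ the posterior over $\M$ is a bounded martingale and so converges almost surely, and by \cref{th:irf-learnability} its marginal over reward functions concentrates in finite expected time on the true $\irf$. After identification, Bayes-optimal behaviour coincides (up to residual transition learning) with optimal RL in a finite communicating MDP with known reward, whose worst-case $\tilde O(\sqrt{t})$ regret bound has constants depending only on $|\S|$, $|\A|$, $D_\M$, all uniform in $\mu\in\M$. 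Composing the uniform identification cost $N$ with the uniform exploitation regret gives $\sup_{\mu\in\M}\Reg(\mu,\pidb,s_0,t)=o(t)$, and hence $\apl(\M,\pidb,s_0,t)=0$.
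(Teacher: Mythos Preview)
Your first two steps match the paper's argument almost exactly: the paper builds the same explore-then-exploit baseline (it runs $\piexp$ until $\irf$ is identified, then runs UCRL2 with confidence $1/\sqrt{t}$), obtains a uniform bound $\Reg(\mu,\pi_t,s_0,t)\le N+cD_\M|\S|\sqrt{t|\A|\log t}+\sqrt t=o(t)$, and then uses Bayes-optimality of $\pidb$ together with non-negativity of regret and $b(\mu^*)>0$ to deduce $\Reg(\mu^*,\pidb,s_0,t)=o(t)$ for each fixed $\mu^*$. The paper's proof in fact stops there: its contradiction hypothesis is ``for some $\mu'$ there exist $k,m>0$ with $\Reg(\mu',\pidb,s_0,t)>kt-m$'', which is the negation of per-environment sublinearity, not of worst-case sublinearity. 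So the uniformity gap you flag is present in the paper's own argument as well; you are not missing a trick that the paper supplies.

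Your proposed closure of that gap, however, does not work as written. First, \cref{th:irf-learnability} is a statement about the exploration policy $\piexp$, not about $\pidb$: it gives no guarantee that the Bayes-optimal policy ever visits the state pairs needed to identify $\irf$, so the claim that the posterior marginal over reward functions ``concentrates in finite expected time'' under $P_{\mu^*}^{\pidb}$ is unsupported. Second, even granting that $\irf$ becomes known, the assertion that ``Bayes-optimal behaviour coincides (up to residual transition learning) with optimal RL in a finite communicating MDP'' and therefore inherits UCRL-type worst-case bounds is false in general: Bayesian planning under a prior over $T$ is not the same algorithm as UCRL2 and carries no such uniform frequentist guarantee. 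Martingale convergence of the posterior gives almost-sure limits but no rate uniform in $\mu$, which is exactly what you need. If you want to retain the per-$\mu$ conclusion (which is all the paper actually proves), your first two paragraphs already suffice; the uniform statement would require a genuinely different argument.
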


\begin{proof}
  To prove this theorem, we combine the exploration policy $\piexp$
  from \cref{th:irf-learnability}, with the UCRL2 algorithm \citep{Jaksch2010}
  that achieves sublinear regret in standard MDPs without reward corruption.
  The combination yields a policy sequence $\pi_t$ with sublinear regret
  in CRMDPs with decoupled feedback.
  Finally, we show that this implies that $\pidb$ has sublinear regret.

  \emph{Combining $\piexp$ and UCRL2.}
  UCRL2 has a free parameter $\delta$ that determines how certain UCRL2
  is to have sublinear regret.
  $\UCRL(\delta)$ achieves sublinear regret with probability
  at least $1-\delta$.
  Let $\pi_t$ be a policy that combines $\piexp$ and UCRL2 by
  first following $\piexp$ from \cref{th:irf-learnability} until $\irf$
  has been learned,
  and then following $\UCRL(1/\sqrt{t})$ with $\irf$ for the rewards
  and with $\delta=1/\sqrt{t}$.

  \emph{Regret of UCRL2}.
  Given that the reward function $\irf$ is known,
  by \citep[Thm.~2]{Jaksch2010},
  $\UCRL(1/\sqrt{t})$ will in any $\mu\in\M$ have regret at most
  \begin{equation}\label{eq:ucrl-regret}
    \Reg(\mu, \UCRL(1/\sqrt{t}), s_0, t \mid {\rm success})
    \leq c D_\M |\S| \sqrt{ t |\A| \log(t)}
  \end{equation}
  for a constant%
  \footnote{The constant can be computed to $c=34\sqrt{3/2}$ \citep{Jaksch2010}.}
  $c$
  and with success probability at least $1-1/\sqrt{t}$.
  In contrast, if UCRL2 fails, then it gets regret at worst $t$.
  Taking both possibilities into account gives the bound
  \begin{align}\label{eq:exp-ucrl-regret}
    \Reg(\mu, \UCRL(1/\sqrt{t}), s_0, t)
    &= P({\rm success}) \Reg(\cdot \mid {\rm success})
      + P({\rm fail}) \Reg(\cdot \mid {\rm fail})\nonumber\\
    &= (1 - 1/\sqrt{t}) \cdot c D_\M |\S| \sqrt{ t |\A| \log(t) }
      \;\;+\;\; 1/\sqrt{t} \cdot t \nonumber\\
    &\leq c D_\M |\S| \sqrt{ t |\A| \log(t)} + \sqrt{t}.
  \end{align}

  \emph{Regret of $\pi_t$.}
  We next consider the regret of $\pi_t$ that combines an
  $\piexp$ exploration phase to learn $\irf$ with UCRL2.
  By \cref{th:irf-learnability}, $\irf$ will be learnt
  in at most $N(|\S|,|\A|,D_\M)$ expected time steps in any $\mu\in\M$.
  Thus, the regret contributed by the learning phase $\piexp$
  is at most $N(|\S|,|\A|,D_\M)$, since the regret can be at most 1 per time step.
  Combining this with \cref{eq:exp-ucrl-regret},
  the regret for $\pi_t$ in any $\mu\in\M$ is bounded by:
  \begin{equation}\label{eq:exp-pit-regret}
    \Reg(\mu, \pi_t, s_0, t)
    \leq N(|\S|, |\A|, D_\M)
    + c D_\M |\S| \sqrt{ t |\A| \log(t) }
    + \sqrt{t} = o(t).
  \end{equation}

  \emph{Regret of $\pidb$.}
  Finally we establish that $\pidb$ has sublinear regret.
  Assume on the contrary that $\pidb$ suffered linear regret.
  Then for some $\mu'\in\M$ there would exist positive constants $k$
  and $m$ such that
  \begin{equation}\label{eq:linear-regret}
    \Reg(\mu',\pidb,s_0,t) > kt - m.
  \end{equation}
  This would imply that the $b$-expected regret of $\pidb$ would be
  higher than the $b$-expected regret than $\pi_t$:
  \begin{align*}
    \sum_{\mu\in\M}b(\mu)\Reg_t(\mu, \pidb, s_0, t)
    &\geq b(\mu')\Reg_t(\mu', \pidb, s_0, t)
    &\text{sum of non-negative elements}\\
    &\geq b(\mu')(kt-m)
    &\text{by \cref{eq:linear-regret}}\\
    &> \sum_{\mu\in\M}b(\mu)\Reg_t(\mu, \pi_t, s_0, t)
    &\text{by \cref{eq:exp-pit-regret} for sufficiently large $t$.}
  \end{align*}
  But $\pidb$ minimises $b$-expected regret,
  since it maximises $b$-expected reward
  $\sum_{\mu\in\M}b(\mu)\oG_t(\mu, \pi, s_0)$ by definition.
  Thus, $\pidb$ must have sublinear regret.
\end{proof}

\subsection{Implications}
\label{sec:implications}

\Cref{th:irf-learnability} gives an abstract condition for which decoupled
RL settings enable agents to learn the true reward function in spite
of sensory corruption.
For the concrete models it implies the following:
\begin{itemize}
\item RL. Due to the ``self-observation'' property of the RL
  observation graph $\Sobs_{s'}=\{s'\}$, the conditions can only be satisfied
  when $\S=\Ssafe$ or $q=0$, i.e.\ when there is no reward corruption at all.
\item CIRL.
  The agent can only observe the supervisor action in the current
  state $s$, so the agent essentially only gets reward information about states $s'$
  reachable from $s$ in a small number of steps.
  Thus, the sets $\Sobs_{s'}$ may be smaller than $2q$ in many settings.
  While the situation is better than for RL, sensory corruption may
  still mislead CIRL agents (see \cref{ex:cirl-corruption} below).
\item LVFS.
  Stories may be available from a large number of states,
  and can describe any state.
  Thus, the sets $\Sobs_{s'}$ are realistically large,
  so the $|\Sobs_{s'}|>2q$ condition can be satisfied for all $s'$.
\item SSRL.
  The supervisor's evaluation of any state $s'$
  may be available from safe states where the agent is back in the lab.
  Thus, the $\Sobs_{s'}\bigcap\Ssafe\not=\emptyset$ condition
  can be satisfied for all $s'$.
\end{itemize}
Thus, we find that RL and CIRL are unlikely to offer
complete solutions to the sensory corruption problem,
but that both LVFS and SSRL do under reasonably realistic assumptions.

Agents drawing from multiple sources of evidence are likely to be the safest,
as they will most easily satisfy the conditions of \cref{th:irf-learnability,th:cr-sublinear}.
For example, humans simultaneously learn their values from
pleasure/pain stimuli (RL),
watching other people act (CIRL),
listening to stories (LVFS), as well as
(parental) evaluation of different scenarios (SSRL).
Combining sources of evidence may also go some way toward
managing reward corruption beyond sensory corruption.
For the showering robot of \cref{ex:db}, decoupled RL
allows the robot to infer the reward of the showering state when in other states.
For example, the robot can ask a human in the kitchen about the true
reward of showering (SSRL), or infer it from human actions
in different states (CIRL).

\paragraph{CIRL sensory corruption}
Whether CIRL agents are vulnerable to reward corruption
has generated some discussion among AI safety researchers 
(based on informal discussion at conferences).
Some argue that CIRL agents are not vulnerable, as they only use the
sensory data as evidence about a true signal, and have no interest in
corrupting the evidence.
Others argue that CIRL agents only observe a function
of the reward function (the optimal policy or action), and
are therefore equally susceptible to reward corruption as RL agents.

\Cref{th:irf-learnability} sheds some light on this issue, as it provides
sufficient conditions for when the corrupt reward problem can be avoided.
The following example illustrates a situation where CIRL does not
satisfy the conditions, and where a CIRL agent therefore suffers
significant regret due to reward corruption.

\begin{example}[CIRL sensory corruption]
  \label{ex:cirl-corruption}
  Formally in CIRL, an agent and a human both make actions in an MDP, with
  state transitions depending on the joint agent-human action $(a, a^H)$.
  Both the human and the agent is trying to optimise a reward function $\irf$,
  but the agent first needs to infer $\irf$ from the human's actions.
  In each transition the agent observes the human action.
  Analogously to how the reward may be corrupt for RL agents,
  we assume that CIRL agents may systematically misperceive
  the human action in certain states.
  Let $\hat a^H$ be the observed human action, which may differ from the true human action $\dot a^H$.

  In this example, there are two states $s_1$ and $s_2$.
  In each state, the agent can choose between the actions $a_1$, $a_2$, and
  $w$, and the human can choose between the actions $a^H_1$ and $a^H_2$.
  The agent action $a_i$ leads to state $s_i$ with certainty, $i=1,2$,
  regardless of the human's action.
  Only if the agent chooses $w$ does the human action matter.
  Generally, $a^H_1$ is more likely to lead to $s_1$ than $a^H_2$.
  The exact transition probabilities are determined by the
  unknown parameter $p$ as displayed on the left:
  
  \begin{minipage}{0.58\linewidth}
    \hspace{-0.8cm}
      \begin{tikzpicture}[ title/.style={}, node distance=4mm]

        \node[draw,circle] (s1) at (0,0) {$s_1$};
        \node[draw,circle] (s2) at (6,0){$s_2$};

        \node[coordinate] (h2) at (5.2,-0.6) {};
        \node[coordinate] (h3) at (6,-1.2) {};
        \node[coordinate] (h4) at (5.2,0.6) {};
        \node[coordinate] (h5) at (6,1.2) {};

        \draw (s2) -- (h4);
        \draw (h4) edge[->,>=latex,out=145,in=35] node[above,pos=0.43,yshift=-1mm] {$1-p$} (s1);
        \draw (h4) edge[out=135,in=150] (h5);
        \draw (h5) edge[->,>=latex,out=-30,in=30] (s2);
        \node [above=of h4,xshift=-1.5mm,yshift=1mm] {$(w,a_1^H)$};
        \node [right=of h5,xshift=-1mm,yshift=-2mm] {$p$};

        \draw (s2) -- (h2);
        \draw (h2) edge[->,>=latex,out=-145,in=-35] node[above,pos=0.43,yshift=-1mm] {$0.5-p$} (s1);
        \draw (h2) edge[out=-135,in=-150] (h3);
        \draw (h3) edge[->,>=latex,out=30,in=-30] (s2);
        \node [below=of h2,xshift=-1.5mm,yshift=-1mm] {$(w,a_2^H)$};
        \node [right=of h3,xshift=-1mm,yshift=2mm] {$0.5+p$};

        \path[->, >=latex] (s2) edge [loop right] node[right,align=center] {$(a_2, \cdot)$} (s2);
        \path[->, >=latex] (s1) edge [loop left] node[left,align=center] {$(a_1, \cdot)$\\$(w, \cdot)$} (s1);

        \draw[->, >=latex] (s1) edge [bend right=13] node[above,yshift=-1mm] {$(a_2,\cdot)$} (s2);
        \draw[->, >=latex] (s2) edge [bend right=13] node[above,yshift=-1mm] {$(a_1,\cdot)$} (s1);
      \end{tikzpicture}
    \end{minipage}
    \begin{minipage}{0.44\linewidth}
      \bgroup
      \setlength{\tabcolsep}{0.5em}
        \begin{tabular}{|c|c|c|c|}
          \hline 
          \begin{tabular}{c} Hypo-\\thesis\end{tabular}
          & $p$
          & \begin{tabular}{c} Best\\ state \end{tabular}
          & \begin{tabular}{c} $s_2$ \\ corrupt \end{tabular}
          \\\hline
          H1 & $0.5$ & $s_1$   & Yes \\\hline
          H2 & $0$  & $s_2$   & No \\\hline
        \end{tabular}
        \egroup
      \end{minipage}

  The agent's two hypotheses for $p$, the true reward/preferred state,
  and the corruptness of state $s_2$ are summarised to the right.
  In hypothesis H1, the human prefers $s_1$, but can only reach $s_1$ from $s_2$
  with $50\%$ reliability.
  In hypothesis H2, the human prefers $s_2$, but can only remain in $s_2$ with
  $50\%$ probability.
  After taking action $w$ in $s_2$,
  the agent always observes the human taking action $\hat a^H_2$.
  In H1, this is explained by $s_2$ being corrupt, and the true human
  action being $a^H_1$.
  In H2, this is explained by the human preferring $s_2$.
  The hypotheses H1 and H2 are empirically indistinguishable,
  as they both predict that the transition $s_1\to s_2$ will occur with $50\%$ probability
  after the observed human action $\hat a^H_2$ in $s_2$.

  Assuming that the agent considers non-corruption to be likelier than
  corruption, the best
  inference the agent can make is that the human prefers $s_2$ to $s_1$ (i.e.\ H2).
  The optimal policy for the agent is then to always choose $a_2$
  to stay in $s_2$, which means the agent suffers maximum regret.
\end{example}

\Cref{ex:cirl-corruption} provides an example where a CIRL agent
``incorrectly'' prefers a state due to sensory corruption.
The sensory corruption is analogous to reward corruption in RL,
in the sense that it leads the agent to the wrong conclusion
about the true reward in the state.
Thus, highly intelligent CIRL agents may be prone to wireheading,
as they may find (corrupt) states $s$ where all evidence in $s$ points
to $s$ having very high reward.%
\footnote{The construction required in \cref{ex:cirl-corruption} to create a
  ``wireheading state'' $s_2$ for CIRL agents is substantially more involved than
  for RL agents, so they may be less vulnerable to reward corruption than
  RL agents.}
In light of \cref{th:irf-learnability}, it is not surprising that the CIRL
agent in \cref{ex:cirl-corruption} fails to avoid the corrupt reward problem.
Since the human is unable to affect the transition probability from $s_1$ to
$s_2$, no evidence about the relative reward between $s_1$ and $s_2$ is
available from the non-corrupt state $s_1$.
Only observations from the corrupt state $s_2$ provide information about
the reward.
The observation graph for \cref{ex:cirl-corruption} therefore looks like
\begin{tikzpicture}
  \node[draw,circle,inner sep=0.5mm] (s1) at (0,0) {$s_1$};
  \node[draw,circle,inner sep=0.5mm] (s2) at (1, 0){$s_2$};
  \draw[dashed] (s2) edge[->,>=latex] (s1);
  \path[->, >=latex] (s2) edge [dashed,loop right] (s2);
\end{tikzpicture},
with no information being provided from $s_1$.

\section{Quantilisation: Randomness Increases Robustness}
\label{sec:quant}

Not all contexts allow the agent to get sufficiently
rich data to overcome the reward corruption problem via
\cref{th:irf-learnability,th:cr-sublinear}.
It is often much easier to construct RL agents than it is
to construct CIRL agents, which in turn may often
be more feasible than designing LVFS or SSRL agents.
Is there anything we can do to increase robustness without providing the
agent additional sources of data?

Going back to the CR agents of \cref{sec:problem},
the problem was that they got stuck on a particular value $\dr^*$ of the observed reward.
If unlucky, $\dr^*$ was available in a corrupt state, in which case
the CR agent may get no true reward.
In other words, there were \emph{adversarial} inputs where the CR
agent performed poorly.
A common way to protect against adversarial inputs is to use a
randomised algorithm. %
Applied to RL and CRMDPs, this idea leads to \emph{quantilising agents}
\citep{Taylor2016a}.
Rather than choosing the state with the highest observed reward,
these agents instead randomly choose a state from a top quantile of high-reward states.

\subsection{Simple Case}
\label{sec:simple-quant}

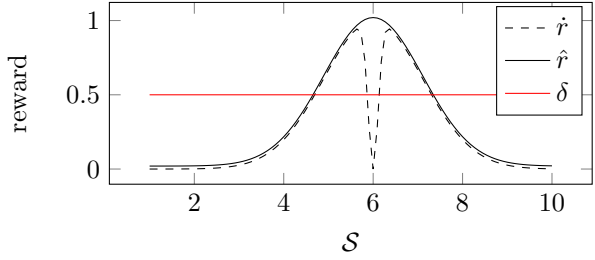
\begin{figure}
  \centering
    \begin{minipage}{0.5\linewidth}
    \begin{tikzpicture}[domain=1:10, samples=100]
      \begin{axis}[xlabel=$\S$, ylabel=reward,height=4cm,width=8cm]
        \addplot[dashed]
        {1.5^(-(x-6)^2) - 1.5^(-100*(x-6)^2) };
        \addplot[mark=none]
        {max(0.01, 1.5^(-(x-6)^2)+0.02};
        \addplot[mark=none,color=red] {0.5};
        \legend{$\ir$,$\dr$,$\delta$};
      \end{axis}
    \end{tikzpicture}
  \end{minipage}
  \begin{minipage}{0.49\linewidth}
    \caption{
      Illustration of quantilisation.
      By randomly picking a state with reward above some threshold $\delta$,
      adversarially placed corrupt states are likely to be avoided.
    }
    \label{fig:quant}
  \end{minipage}
\end{figure}

To keep the idea simple, a quantilisation agent is first defined for
the simple case where the agent can stay in any state of its choosing
(\cref{as:stay}).
\Cref{th:quant} establishes a simple regret bound for this setting.
A more general quantilisation agent is developed in \cref{sec:gen-quant}.

\begin{definition}[Quantilising Agent]
  \label{def:quant}
  For $\delta<1$, the $\delta$-quantilising agent $\piquant$
  random walks until all states have been visited at least once.
  Then it selects a state $\tilde s$ uniformly at random
  from $\S^\delta=\{s:\orf(s)\geq \delta\}$, the top quantile
  of high observed reward states.
  Then $\piquant$ goes to $\tilde s$ (by random walking or otherwise)
  and stays there.
\end{definition}

For example, a quantilising robot in \cref{ex:db} would first try to
find many ways in which it could get high observed reward, and then randomly
pick one of them.
If there are many more high reward states than corrupt states (e.g. the shower
is the only place with inflated rewards),
then this will yield a reasonable amount of true reward with high probability.

\begin{theorem}[Quantilisation]\label{th:quant}
  In any CRMDP satisfying \cref{as:lim-del,as:easy},
  the $\delta$-quantilising agent $\pi^\delta$ with $\delta=1-\sqrt{q/|\S|}$
  suffers time-averaged regret at most
  \begin{equation}\label{eq:quant-regret}
    \apl(\M,\pi^\delta,s_0,t)\leq 1- \left(1-\sqrt{q/|\S|}\right)^2.
  \end{equation}
\end{theorem}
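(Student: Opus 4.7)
The plan is to show that, after two transient phases of finite expected length (the initial random walk to cover all states, and the travel to $\tilde s$), the agent $\pi^\delta$ earns per-step expected true reward at least $\delta^2 = (1-\sqrt{q/|\S|})^2$, which together with the vanishing transient overhead yields the stated time-averaged regret bound. The transient phases are handled using the same random-walk argument as in the proof of \cref{th:irf-learnability}: under \cref{as:communicate} a random walk reaches any target state in expected time at most $2 D_\M |\A|^{2 D_\M}$, so both the cover time of the exploration phase and the travel time to $\tilde s$ are finite in expectation. Since rewards lie in $[0,1]$, these phases contribute only $O(1)$ to cumulative regret and vanish from $\apl(\M,\pi^\delta,s_0,t)$ as $t\to\infty$.

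For the steady-state analysis, I combine three simple counts. By \cref{as:high-ut} applied to the observed reward function, at most $\delta|\S|$ states lie outside $\S^\delta=\{s:\orf(s)\geq\delta\}$, so $|\S^\delta|\geq(1-\delta)|\S|$. By \cref{as:lim-del}, at most $q$ of the states in $\S^\delta$ are corrupt, and every non-corrupt $s\in\S^\delta$ satisfies $\irf(s)=\orf(s)\geq\delta$. Hence, for $\tilde s$ chosen uniformly from $\S^\delta$,
\[
\EE[\irf(\tilde s)] \;\geq\; \delta\cdot\frac{|\S^\delta|-q}{|\S^\delta|} \;\geq\; \delta\left(1-\frac{q}{(1-\delta)|\S|}\right).
\]
Substituting $\delta=1-\sqrt{q/|\S|}$ gives $(1-\delta)|\S|=\sqrt{q|\S|}$ and $q/((1-\delta)|\S|)=\sqrt{q/|\S|}=1-\delta$, so $\EE[\irf(\tilde s)]\geq\delta^2$. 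By \cref{as:stay} the agent can remain at $\tilde s$ indefinitely, and since the optimal per-step reward is at most $1$, combining with the transient-phase argument yields $\apl(\M,\pi^\delta,s_0,t)\leq 1-\delta^2=1-(1-\sqrt{q/|\S|})^2$.

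The main subtlety is the appeal to \cref{as:high-ut} for the observed reward $\orf$: the assumption is phrased for ``reward'' without a dot/hat qualifier, and the cleanest regret bound requires reading it as also applying to $\orf$, which is the reward the agent actually sorts on. This matches the convention used in the construction in the proof of \cref{th:rl-imp1}, where the authors explicitly require $\orf$ to satisfy \cref{as:high-ut}. Weakening to the true-reward-only reading would force $|\S^\delta|\geq(1-\delta)|\S|-q$ and yield only the looser bound $2\sqrt{q/|\S|}$ in place of $1-(1-\sqrt{q/|\S|})^2$; once the stronger reading is adopted, the remainder of the proof is a direct counting computation.
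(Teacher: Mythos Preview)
Your proof is correct and follows essentially the same route as the paper: bound the expected true reward of the sampled state $\tilde s$ by $\delta(1-q/|\S^\delta|)$, use \cref{as:high-ut} to get $|\S^\delta|\geq(1-\delta)|\S|$, and substitute $\delta=1-\sqrt{q/|\S|}$. Your extra care with the transient phases and your explicit flagging of the $\irf$-vs-$\orf$ reading of \cref{as:high-ut} go beyond what the paper spells out, but the core argument is identical.
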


\begin{proof}
  By \cref{as:communicate}, $\piquant$ eventually visits all states when random
  walking. By \cref{as:stay}, it can stay in any given state $s$.

  The observed reward $\orf(s)$ in any state $s\in\S^\delta$ is at least
  $\delta$.
  By \cref{as:lim-del}, at most $q$ of these states are corrupt;
  in the worst case, their true reward is 0 and
  the other $|\S^\delta|-q$ states (if any) have true reward $\delta$.
  Thus, with probability at least $(|\S^\delta|-q)/|\S^\delta| =
  1-q/|\S^\delta|$,
  the $\delta$-quantilising agent obtains true reward at least $\delta$
  at each time step, which gives 
  \begin{equation}\label{eq:quant}
    \apl(\M,\pi^\delta,s_0,t)\leq 1- \delta(1-q/|\S^\delta|).
  \end{equation}
  (If $q\geq|\S^\delta|$, the bound \eqref{eq:quant} is vacuous.)

  Under \cref{as:high-ut}, for any $\delta\in[0,1]$,
  $|\S^\delta|\geq (1-\delta) |\S|$. Substituting this into \cref{eq:quant} gives:
  \begin{equation}\label{eq:opt-reg-bound}
    \apl(\M,\pi^\delta,s_0,t)\leq 1- \delta\left(1-\frac{q}{(1-\delta)|\S|}\right).
  \end{equation}
  \Cref{eq:opt-reg-bound} is optimised by $\delta=1-\sqrt{q/|\S|}$, which
  gives the stated regret bound.
\end{proof}

The time-averaged regret gets close to zero when the fraction of
corrupt states $q/|\S|$ is small.
For example, if at most $0.1\%$ of the states are corrupt,
then the time-averaged regret will be at most
$1-(1-\sqrt{0.001})^2\approx 0.06$.
Compared to the $\pirl$ and $\pidb$ agents that had regret close
to 1 under the same conditions (\cref{th:rl-imp1}),
this is a significant improvement.

If rewards are stochastic, then the quantilising agent may be modified
to revisit all states many times, until a confidence interval
of length $2\eps$ and confidence $1-\eps$ can be established for
the expected reward in each state.
Letting $\piquant_t$ be the quantilising agent with $\eps=1/t$
gives the same regret bound \cref{eq:quant-regret} with $\piquant$
substituted for $\piquant_t$.

\paragraph{Interpretation}
It may seem odd that randomisation improves worst-case regret.
Indeed, if the corrupt states were chosen randomly by the environment,
then randomisation would achieve nothing.
To illustrate how randomness can increase robustness,
we make an analogy to Quicksort,
which has average time complexity $O(n\log n)$, but worst-case complexity $O(n^2)$.
When inputs are guaranteed to be random, Quicksort is a simple and fast
sorting algorithm.
However, in many situations, it is not safe to assume that inputs are random.
Therefore, a variation of Quicksort that randomises the input before it sorts
them is often more robust.
Similarly, in the examples mentioned in the introduction, the corrupt states
precisely coincide with the states the agent prefers;
such situations would be highly unlikely if the corrupt states were
randomly distributed.
\citet{Li1992} develops an interesting formalisation of this idea.

Another way to justify quantilisation is by Goodhart's law, which states
that most measures of success cease to be good measures when used as targets.
Applied to rewards, the law would state that cumulative reward is only a good
measure of success when the agent is not trying to optimise reward.
While a literal interpretation of this would defeat the whole purpose of RL, a softer interpretation is
also possible, allowing reward to be a good measure of success as long
as the agent does not try to optimise reward \emph{too hard}.
Quantilisation may be viewed as a way to build agents that are more conservative
in their optimisation efforts \citep{Taylor2016a}.

\paragraph{Alternative randomisation}
Not all randomness is created equal.
For example, the simple randomised soft-max and $\eps$-greedy policies
do not offer regret bounds on par with $\pi^\delta$, as shown by the
following example.
This motivates the more careful randomisation procedure used by the
quantilising agents.

\begin{example}[Soft-max and $\eps$-greedy]
  Consider the following simple CRMDP with $n>2$ actions $a_1,\dots,a_n$:
  \begin{center}
    \begin{tikzpicture}[ title/.style={},node distance=6mm]
      
      \node[draw,circle] (s1) at (0,0) {$s_1$};
      \node[draw,circle] (s2) at (3,0){$s_2$};
      \node[above of=s1] {$\dr=\ir=1-\eps$};
      \node[below of=s2] {$\ir=0$};
      \node[above of=s2] {$\dr=1$};

      \path[->, >=latex] (s2) edge [loop right] node[right] {$a_2,\dots,a_n$} (s2);
      \path[->, >=latex] (s1) edge [loop left] node[left] {$a_1$} (s1);

      \draw[->, >=latex] (s1) edge [bend right] node[below] {$a_2,\dots,a_n$} (s2);
      \draw[->, >=latex] (s2) edge [bend right] node[above] {$a_1$} (s1);
    \end{tikzpicture}
  \end{center}
  State $s_1$ is non-corrupt with $\orf(s_1)=\irf(s_1)=1-\eps$ for small $\eps>0$,
  while $s_2$ is corrupt with $\orf(s_2)=1$ and $\irf(s_2)=0$.
  The Soft-max and $\eps$-greedy policies will assign higher value to
  actions $a_2,\dots,a_n$ than to $a_1$.
  For large $n$, there are many ways of getting to $s_2$,
  so a random action leads to $s_2$ with high probability.
  Thus, soft-max and $\eps$-greedy will spend the vast majority of the time in
  $s_2$, regardless of randomisation rate and discount parameters.
  This gives a regret close to $1-\eps$, compared to an informed policy always
  going to $s_1$.
  Meanwhile, a $\delta$-quantilising agent with $\delta\leq 1/2$ will
  go to $s_1$ and $s_2$ with equal probability, which gives a more modest regret of
  $(1-\eps)/2$.
\end{example}

\subsection{General Quantilisation Agent}
\label{sec:gen-quant}

This section generalises the quantilising agent to RL
problems not satisfying \cref{as:easy}.
This generalisation is important, because it is usually
not possible to remain in one state and get high reward.
The most naive generalisation would be to
sample between high reward policies,
instead of sampling from high reward states.
However, this will typically not provide good guarantees.
To see why, consider a situation where there is a single high reward
corrupt state $s$, and there are many ways to reach and leave $s$.
Then a wide range of \emph{different} policies all get high reward
from $s$.
Meanwhile, all policies getting reward from other states may receive
relatively little reward.
In this situation, sampling from the most high reward policies is not
going to increase robustness, since
the sampling will just be between different ways of
getting reward from the same corrupt state $s$.

For this reason, we must ensure that different ``sampleable''
policies get reward from different states.
As a first step, we make a couple of definitions
to say which states provide reward to which policies.
The concepts of \cref{def:value-support}
are illustrated in \cref{fig:value-support}.

\begin{definition}[Unichain CRMDP {{\citep[p.~348]{Puterman1994}}}]
  A CRMDP $\mu$ is \emph{unichain} %
  if any stationary policy $\pi:\S\to\Delta\A$
  induces a stationary distribution $d_\pi$ on $\S$ that is independent
  of the initial state $s_0$.
\end{definition}

\begin{definition}[Value support]
  \label{def:value-support}
  In a unichain CRMDP, let the \emph{asymptotic value contribution} of $s$ to $\pi$
  be $\vc^\pi(s)=d_\pi(s)\orf(s)$.
  We say that a set $\S^\delta_i$ is \emph{$\delta$-value supporting} a policy
  $\pi_i$ if
  \[
    \forall s\in\S^\delta_i\colon \vc^{\pi_i}(s)\geq \delta/|\S^\delta_i|.
  \]
\end{definition}

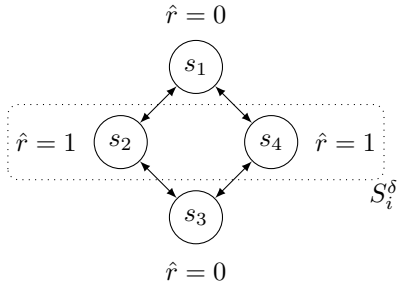
\begin{figure}
  \centering
  \begin{minipage}{0.4\textwidth}
    \begin{tikzpicture}
      \node[draw,circle] (s1) at (0, 1) {$s_1$};
      \node[draw,circle] (s2) at (-1,0) {$s_2$};
      \node[draw,circle] (s3) at (0,-1) {$s_3$};
      \node[draw,circle] (s4) at (1, 0) {$s_4$};
      \node[above = 1mm of s1] {$\dr=0$};
      \node[left = 1mm of s2] {$\dr=1$};
      \node[below = 1mm of s3] {$\dr=0$};
      \node[right = 1mm of s4] {$\dr=1$};
      \draw (s1) edge[<->,>=latex] (s2);
      \draw (s2) edge[<->,>=latex] (s3);
      \draw (s3) edge[<->,>=latex] (s4);
      \draw (s4) edge[<->,>=latex] (s1);
      \draw[dotted, rounded corners] (-2.5,-0.5) rectangle (2.5,0.5);
      \node at (2.5,-0.7) {$S^\delta_i$};
    \end{tikzpicture}
  \end{minipage}
  \begin{minipage}{0.59\textwidth}
    \caption{Illustration of $\dr$-contribution and value support. Assume the
      policy $\pi_i$ randomly traverses a loop $s_1,s_2,s_3,s_4$ indefinitely,
      with $d_{\pi_i}(s_j)=1/4$ for $j=1,\dots,4$. The $\dr$-contribution
      $\vc^{\pi_i}$ is 0 in $s_1$ and $s_3$, and $\vc^{\pi_i}$ is
      $1/4\cdot 1=1/4$ in $s_2$ and $s_4$.
      The set $\S^\delta_i=\{s_2,s_4\}$ is a $\delta$-value supporting
      $\pi_i$ for $\delta=1/2$, since $\vc^{\pi_i}(s_2)=\vc^{\pi_i}(s_4)\geq
      (1/2)/2=1/4$. }
    \label{fig:value-support}
  \end{minipage}
\end{figure}

We are now ready to define a general $\delta$-Quantilising agent.
The definition is for theoretical purposes only.
It is unsuitable for practical implementation both because of the
extreme data and memory requirements of Step 1,
and because of the computational complexity of Step 2.
Finding a practical approximation is left for future research.

\begin{definition}[General $\delta$-Quantilising Agent]
  \label{def:gen-quant}
  In a unichain CRMDP,
  the \emph{generalised $\delta$-quantilising agent $\pi^\delta$}
  performs the following steps. The input is a CRMDP $\mu$
  and a parameter $\delta\in[0,1]$.
  \begin{enumerate}
  \item Estimate the value of all stationary policies, including their
    value support.
  \item Choose a collection of disjoint sets $\S^\delta_i$, each
    $\delta$-value supporting a stationary policy $\pi_i$.
    If multiple choices are possible, choose one maximising the cardinality
    of the union $\S^\delta=\bigcup_i\S^\delta_i$.
    If no such collection exists, return: ``Failed because $\delta$ too high''.
  \item Randomly sample a state $s$ from $\S^\delta=\bigcup_i\S^\delta_i$.
  \item Follow the policy $\pi_i$ associated with the set $\S^\delta_i$
    containing $s$.
  \end{enumerate}
\end{definition}

The general quantilising agent of \cref{def:gen-quant} is a generalisation
of the simple quantilising agent of \cref{def:quant}.
In the special case where \cref{as:easy} holds,
the general agent reduces to the simpler one
by using singleton sets $\S^\delta_i=\{s_i\}$ for high reward states $s_i$,
and by letting $\pi_i$ be the policy that always stays in $s_i$.
In situations where it is not possible to keep receiving high reward
by remaining in one state, the generalised \cref{def:gen-quant} allows policies
to solicit rewards from a range of states.
The intuitive reason for choosing the policy $\pi_i$ with probability proportional to
the value support in Steps 3--4 is that policies with larger value support are better at
avoiding corrupt states. For example, a policy only visiting one state may have
been unlucky and picked a corrupt state. In contrast, a policy obtaining
reward from many states must be ``very unlucky'' if
all the reward states it visits are corrupt.

\begin{theorem}[General quantilisation agent regret bound]
  \label{th:gen-quant}
  In any unichain CRMDP $\mu$,
  a general $\delta$-quantilising agent $\pi^\delta$ suffers time-averaged regret at most
  \begin{equation}\label{eq:gen-quant-bound}
    \apl(\M,\pi^\delta,s_0,t)\leq 1- \delta(1-q/|\S^\delta|)
  \end{equation}
  provided a non-empty collection $\{\S^\delta_i\}$ of $\delta$-value supporting
  sets exists.
\end{theorem}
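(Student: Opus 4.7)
The plan is to reduce the time-averaged regret bound to a lower bound on the asymptotic per-step expected true reward of $\pi^\delta$. Since true rewards lie in $[0,1]$, any policy (including the optimal one) attains cumulative true reward at most $t$, so it is enough to show
\[
\lim_{t\to\infty}\frac{1}{t}\iG_t(\mu,\pi^\delta,s_0)\;\geq\;\delta\bigl(1-q/|\S^\delta|\bigr).
\]
All transient costs --- the Step~1 estimation of policy values and value supports, the sampling in Step~3, and the mixing time of $\pi_i$ to its stationary distribution --- are $o(t)$, and therefore vanish inside $\apl$.

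Since $\mu$ is unichain, once $\pi^\delta$ samples $s$ and commits to $\pi_i$ (where $i$ is the unique index with $s\in\S^\delta_i$), its asymptotic per-step true reward equals $V(\pi_i) := \sum_{s'} d_{\pi_i}(s')\irf(s')$. Because $s$ is drawn uniformly from $\S^\delta=\bigcup_i\S^\delta_i$ (disjoint union), the overall expected per-step true reward of $\pi^\delta$ is $\sum_i (|\S^\delta_i|/|\S^\delta|)\,V(\pi_i)$. The core step is thus to lower-bound each $V(\pi_i)$ by combining the $\delta$-value-supporting property with \cref{as:lim-del}.

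Let $C\subseteq\S$ denote the set of corrupt states, so $|C|\leq q$ by \cref{as:lim-del}, and set $q_i := |\S^\delta_i\cap C|$. For every non-corrupt $s'\in\S^\delta_i\setminus C$, one has $\orf(s')=\irf(s')$, and the value-support inequality $\vc^{\pi_i}(s')=d_{\pi_i}(s')\orf(s')\geq\delta/|\S^\delta_i|$ then gives $d_{\pi_i}(s')\irf(s')\geq\delta/|\S^\delta_i|$. Summing over the $|\S^\delta_i|-q_i$ non-corrupt states in $\S^\delta_i$ and dropping the remaining non-negative terms yields $V(\pi_i)\geq \delta(|\S^\delta_i|-q_i)/|\S^\delta_i|$. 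Averaging with weights $|\S^\delta_i|/|\S^\delta|$ and using disjointness of the $\S^\delta_i$ (so that $\sum_i q_i\leq|C|\leq q$) gives
\[
\sum_i \frac{|\S^\delta_i|}{|\S^\delta|}V(\pi_i)\;\geq\;\delta\cdot\frac{|\S^\delta|-\sum_i q_i}{|\S^\delta|}\;\geq\;\delta\bigl(1-q/|\S^\delta|\bigr),
\]
from which the claimed bound on $\apl$ follows.

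The main obstacle is not the algebra but the bookkeeping justifying that the transient phases are genuinely $o(t)$: one must argue that Step~1 can in principle be completed in finite expected time (the unichain assumption makes this natural, since every stationary policy induces a well-defined $d_{\pi_i}$ that can be empirically estimated) and that mixing to $d_{\pi_i}$ contributes at most a finite expected cost. Since \cref{def:gen-quant} is explicitly presented as a theoretical construction rather than an implementable algorithm, it is reasonable to treat Step~1 as providing $\{(\pi_i,\S^\delta_i)\}$ with perfect accuracy, which makes the transient contributions disappear and leaves only the averaging argument above.
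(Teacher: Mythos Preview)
Your proposal is correct and follows essentially the same approach as the paper's proof: lower-bound the asymptotic true reward of each $\pi_i$ by $\delta(|\S^\delta_i|-q_i)/|\S^\delta_i|$ via the value-support property on the non-corrupt states, then average with weights $|\S^\delta_i|/|\S^\delta|$ so that the $|\S^\delta_i|$ cancel and $\sum_i q_i\leq q$ yields the bound. Your explicit handling of the transient phases and your use of $\sum_i q_i\leq q$ (rather than equality) are slightly more careful than the paper's presentation, but the structure is identical.
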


\bgroup
\allowdisplaybreaks
\begin{proof}
  We will use the notation from \cref{def:gen-quant}.

  Step 1 is well-defined since the CRMDP is unichain,
  which means that for all stationary policies $\pi$
  the stationary distribution $d_\pi$
  and the value support $\vc^\pi$ are well-defined
  and may be estimated simply by following the policy $\pi$.
  There is a (large) finite number of stationary policies, so
  in principle their stationary distributions and value support can
  be estimated.

  To bound the regret, consider first the average reward of a policy
  $\pi_i$ with value support $\S^\delta_i$.
  The policy $\pi_i$ must obtain asymptotic average observed reward at least:
  \begin{align*}
    \oginf(\mu,\pi_i,s_0)
    &= \sum_{s\in\S}d_\pi(s)\orf(s)
    &\text{by definition of $d_\pi$ and $\oG_t$}\\
    &\geq \sum_{s\in\S^\delta_i}d_\pi(s)\orf(s)
    &\text{sum of positive terms}\\
    &\geq\sum_{s\in\S^\delta_i}\delta/|\S^\delta_i|
    &\text{$\S^\delta_i$ is $\delta$-value support for $\pi_i$}\\
    &=|\S^\delta_i|\cdot\delta/|\S^\delta_i| = \delta
  \end{align*}
  If there are $q_i$ corrupt states in $\S^\delta_i$ with true reward 0,
  then the average true reward must be
  \begin{equation}\label{eq:ginf}
    \iginf(\mu, \pi_i,s_0)\geq(|\S^\delta_i|-q_i)\cdot \delta/|\S^\delta_i| 
    =(1-q_i/|\S^\delta_i|)\cdot\delta
  \end{equation}
  since the true reward must correspond to the observed reward in all
  the $(|\S^\delta_i|-q_i)$ non-corrupt states.
  
  For any distribution of corrupt states,
  the quantilising agent that selects $\pi_i$ with probability
  $P(\pi_i)=|\S^\delta_i|/|\S^\delta|$ 
  will obtain
  \begin{align*}
    \ginf(\mu,\pi^\delta,s_0)
    &= \lim_{t\to\infty}\frac{1}{t}\sum_iP(\pi_i)G_t(\mu,\pi_i,s_0)\\
    &\geq \sum_iP(\pi_i) (1-q_i/|\S^\delta_i|) \cdot\delta & \text{by equation \cref{eq:ginf}}\\
    &= \delta\sum_i \frac{|S^\delta_i|}{|\S^\delta|}(1-q_i/|\S^\delta_i|) & \text{by construction of $P(\pi_i)$}\\
    &= \frac{\delta}{|\S^\delta|}\sum_i (|S^\delta_i|-q_i) & \text{elementary algebra}\\
    &= \frac{\delta}{|\S^\delta|}(|\S^\delta|-q) 
   = \delta(1-q/|\S^\delta|) & \text{by summing $|\S^\delta_i|$ and $q_i$}
  \end{align*}
  The informed policy gets true reward at most 1 at each time step,
  which gives the claimed bound \eqref{eq:gen-quant-bound}.
\end{proof}
\egroup

When \cref{as:easy} is satisfied, the bound is the same as for the
simple quantilising agent in \cref{sec:simple-quant} for $\delta=1-\sqrt{q/|\S|}$.
In other cases, the bound may be much weaker.
For example, in many environments it is not possible to obtain reward by
remaining in one state.
The agent may have to spend significant time ``travelling'' between
high reward states.
So typically only a small fraction of the time will be spent
in high reward states, which in turn makes the stationary distribution
$d_\pi$ is small.
This puts a strong upper bound on the value contribution $\vc^\pi$,
which means that the value supporting sets $\S^\delta_i$ will be empty
unless $\delta$ is close to 0.
While this makes the bound of \cref{th:gen-quant} weak,
it nonetheless bounds the regret away from 1
even under weak assumptions, which is a significant improvement on
the RL and CR agents in \cref{th:rl-imp1}.

\paragraph{Examples}
To make the discussion a bit more concrete,
let us also speculate about the performance of a quantilising agent in
some of the examples in the introduction:
\begin{itemize}
\item
  In the boat racing example (\cref{ex:reward-misspecification}),
  the circling strategy only got about $20\%$ higher score than a
  winning strategy \citep{openai2016}.
  Therefore, a quantilising agent would likely only need to sacrifice about $20\%$
  observed reward in order to be able to randomly select from
  a large range of winning policies.
\item
  In the wireheading example (\cref{ex:wireheading}),
  it is plausible that the agent gets significantly
  more reward in wireheaded states compared to ``normal'' states.
  Wireheading policies may also be comparatively rare,
  as wireheading may require very deliberate sequences of actions to override
  sensors.
  Under this assumption, a quantilising agent may 
  be less likely to wirehead.
  While it may need to sacrifice a large amount of observed reward compared to
  an RL agent, its true reward may often be greater.
\end{itemize}

\paragraph{Summary}
In summary,
quantilisation offers a way to increase robustness via randomisation,
using only reward feedback.
Unsurprisingly, the strength of the regret bounds heavily depends on
the assumptions we are willing to make, such as the prevalence of high
reward states.
Further research may investigate efficient approximations
and empirical performance of quantilising agents, as well as
dynamic adjustments of the threshold $\delta$.
Combinations with imperfect decoupled RL solutions (such as CIRL),
as well as extensions to infinite state spaces
could also offer fruitful directions for further theoretical investigation.
\citet{Taylor2016a} discusses some general open problems related
to quantilisation.

\section{Experimental Results}
\label{sec:experiments}

In this section the theoretical results are illustrated with some simple
experiments.
The setup is a gridworld containing some true reward tiles (indicated by yellow
circles) and some corrupt reward tiles (indicated by
blue squares).  We use a setup with 1, 2 or 4 goal tiles with true reward $0.9$ each, and one corrupt reward tile with observed reward $1$ and true reward $0$ (Figure \ref{fig:start} shows the starting positions). Empty tiles have reward $0.1$, and walking into a wall gives reward $0$.
The state is represented by the $(x,y)$ coordinates of the agent. 
The agent can move up, down, left, right, or stay put. 
The discounting factor is $\gamma=0.9$.
This is a continuing task, so the environment does not reset when the agent visits the corrupt or goal tiles.
The experiments were implemented in the AIXIjs framework for reinforcement learning \citep{Aslanides2017} and the code is available online in the AIXIjs repository (\url{http://aslanides.io/aixijs/demo.html?reward_corruption}).

\begin{figure}
\begin{subfigure}{0.3\textwidth}
\centering
\includegraphics[scale=0.4]{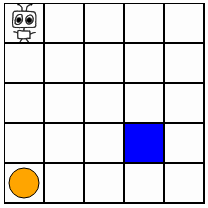}
\caption{1 goal tile}
\end{subfigure}\hfill
\begin{subfigure}{0.3\textwidth}
\centering
\includegraphics[scale=0.4]{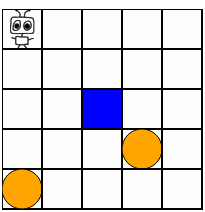}
\caption{2 goal tiles}
\end{subfigure}\hfill
\begin{subfigure}{0.3\textwidth}
\centering
\includegraphics[scale=0.4]{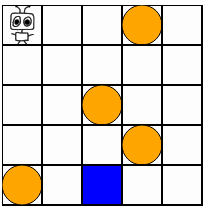}
\caption{4 goal tiles}
\end{subfigure}
\caption{Starting positions: the blue square indicates corrupt reward, and the yellow circles
  indicate true rewards.
}
\label{fig:start}
\end{figure}

\begin{figure}
\begin{subfigure}{0.5\textwidth}
\centering
\includegraphics[scale=0.3]{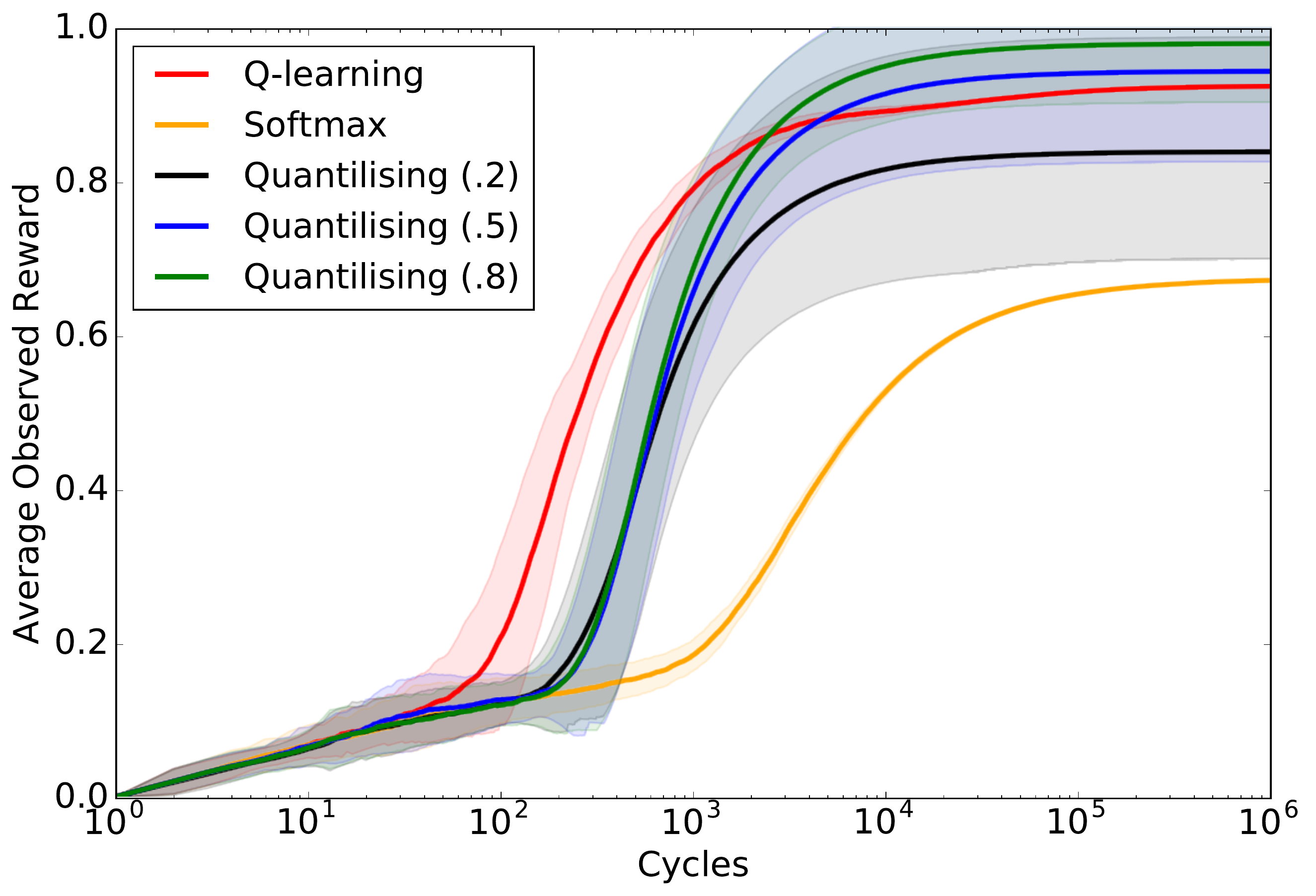}
\caption{Observed rewards for 1 goal tile}
\end{subfigure}\hfill
\begin{subfigure}{0.5\textwidth}
\centering
\includegraphics[scale=0.3]{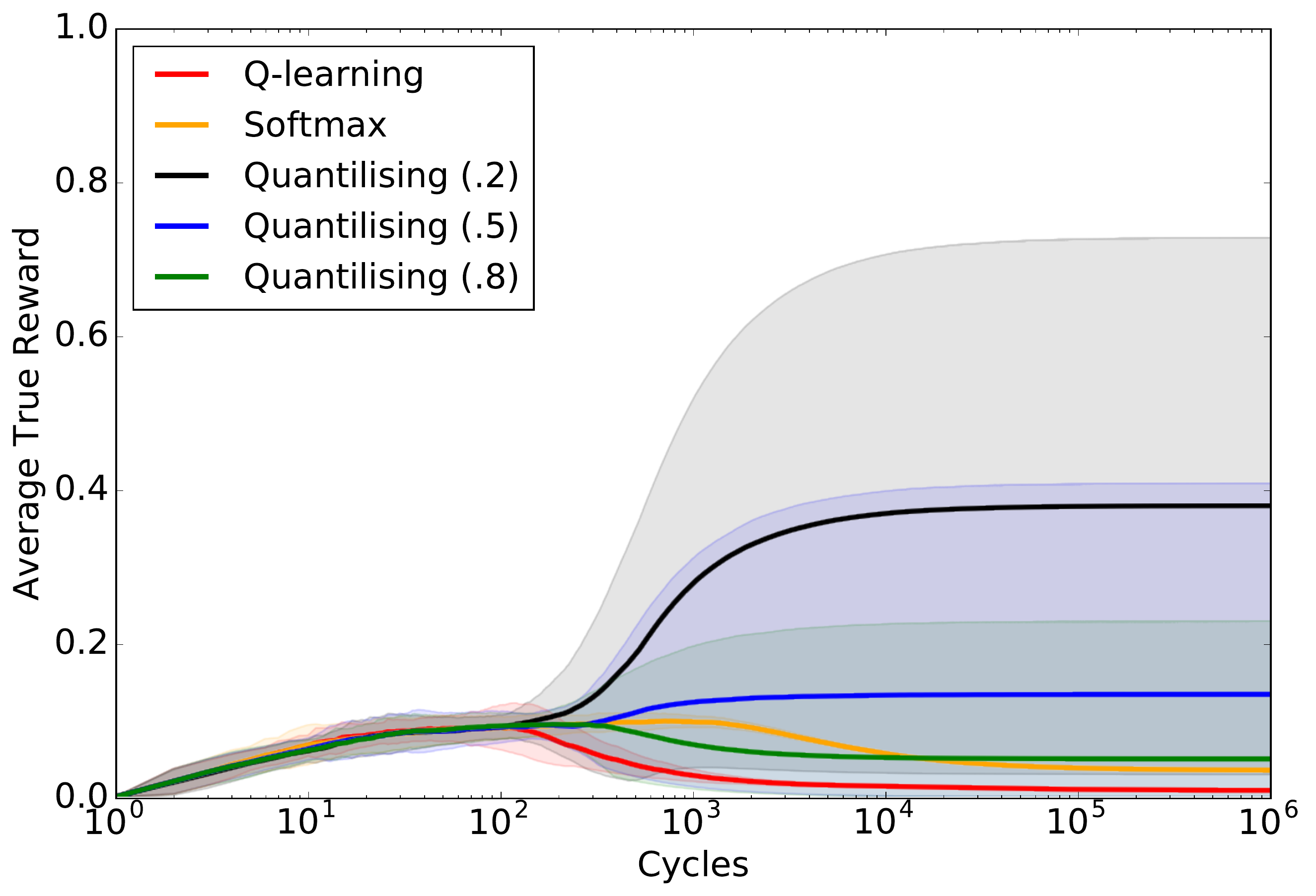}
\caption{True rewards for 1 goal tile}
\end{subfigure}
\begin{subfigure}{0.5\textwidth}
\centering
\includegraphics[scale=0.3]{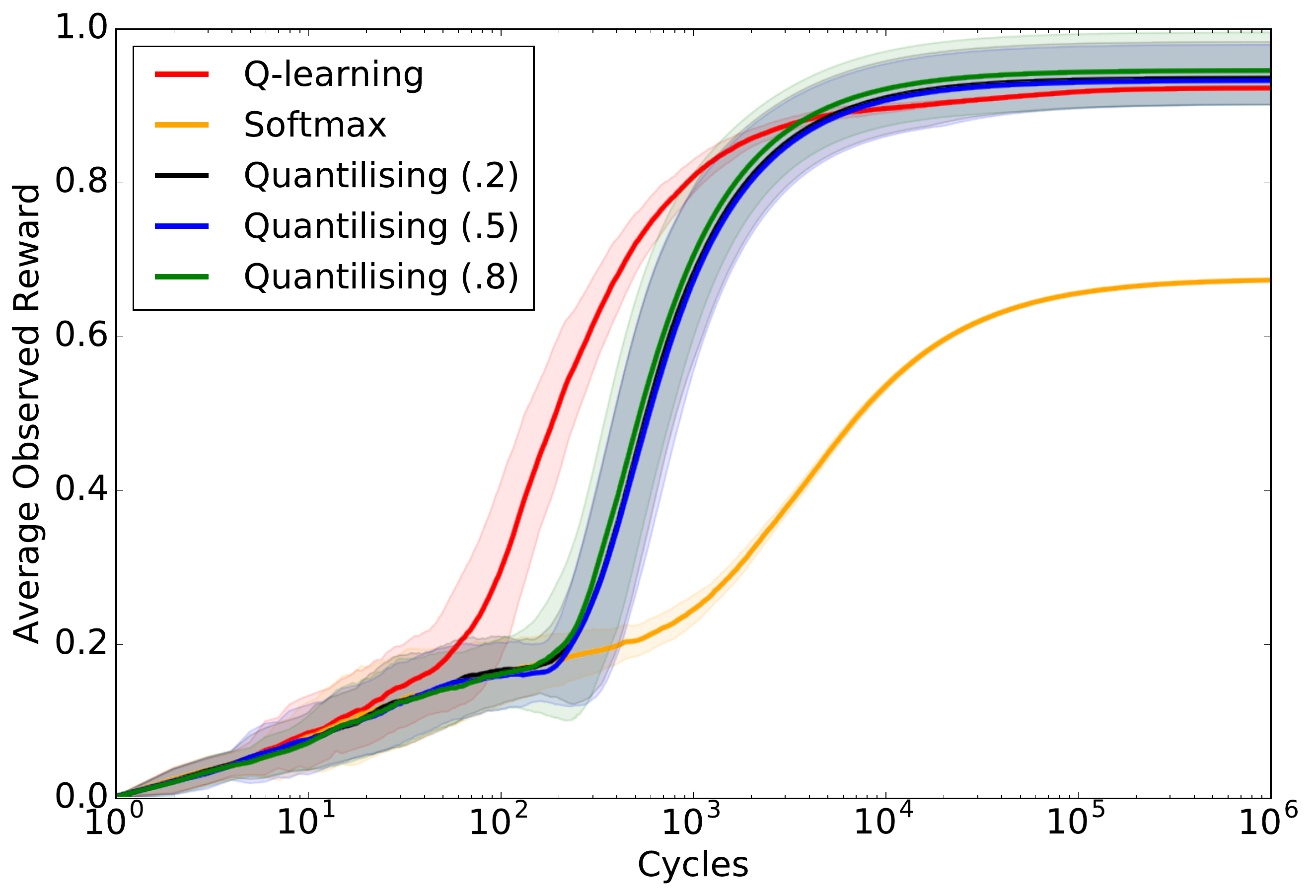}
\caption{Observed rewards for 2 goal tiles}
\end{subfigure}\hfill
\begin{subfigure}{0.5\textwidth}
\centering
\includegraphics[scale=0.3]{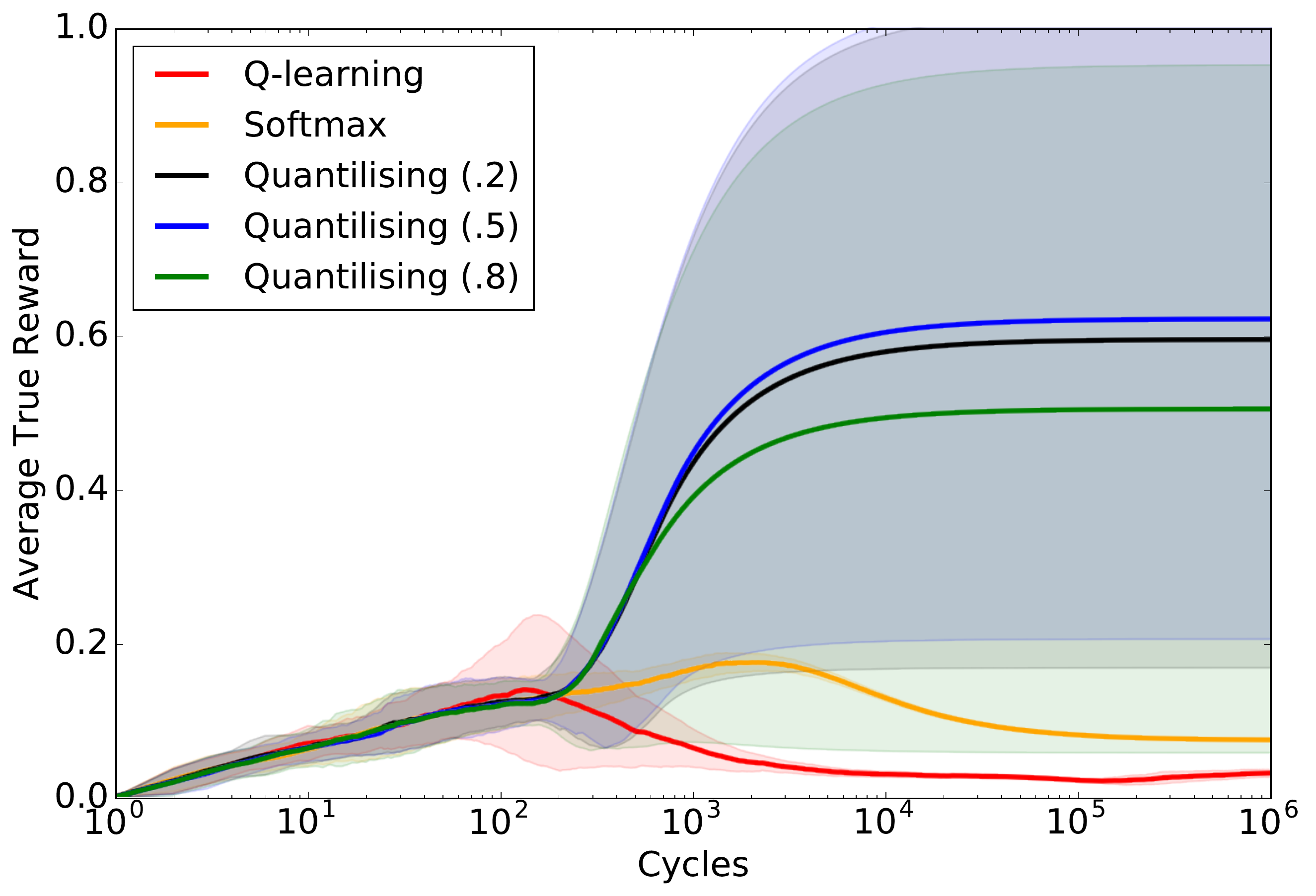}
\caption{True rewards for 2 goal tiles}
\end{subfigure}
\begin{subfigure}{0.5\textwidth}
\centering
\includegraphics[scale=0.3]{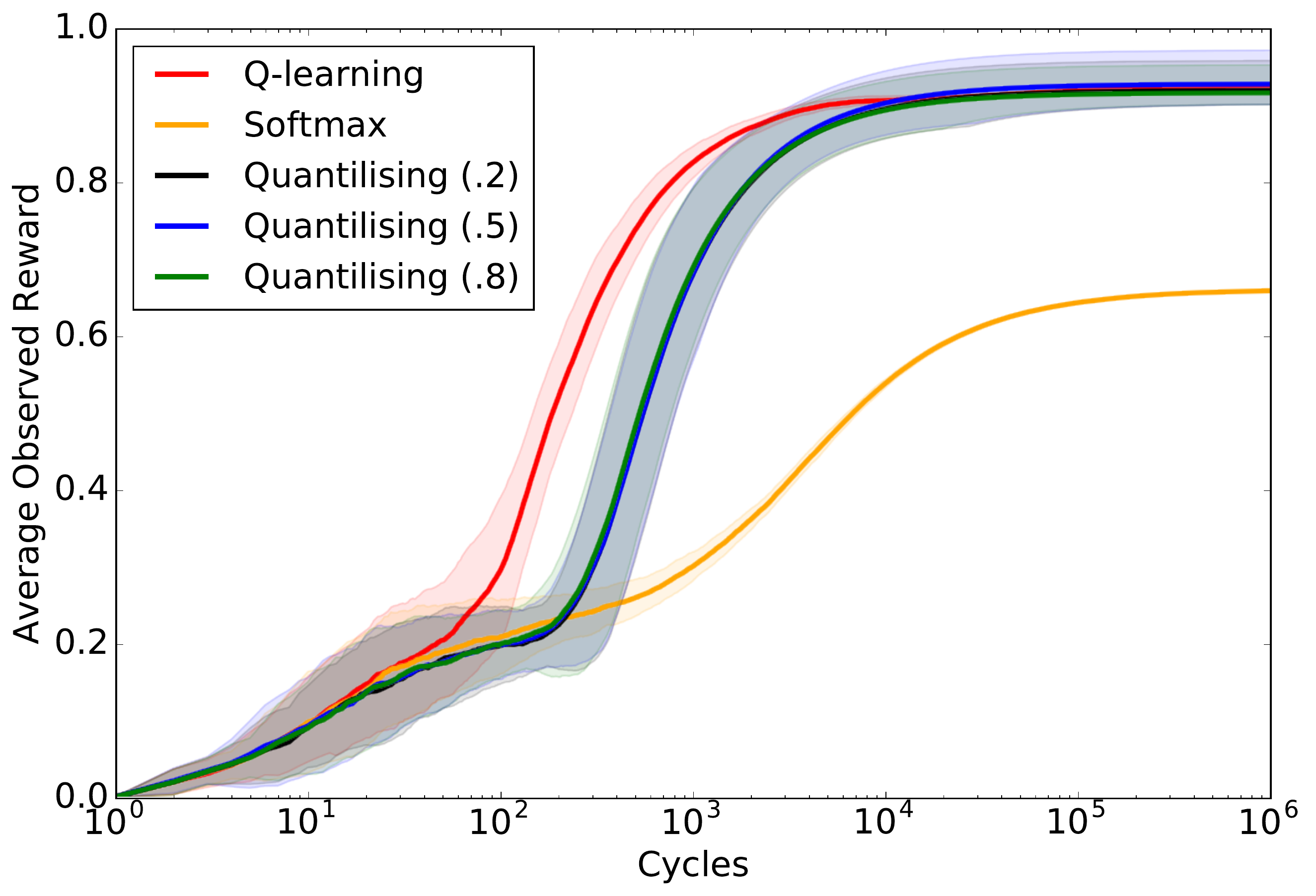}
\caption{Observed rewards for 4 goal tiles}
\end{subfigure}\hfill
\begin{subfigure}{0.5\textwidth}
\centering
\includegraphics[scale=0.3]{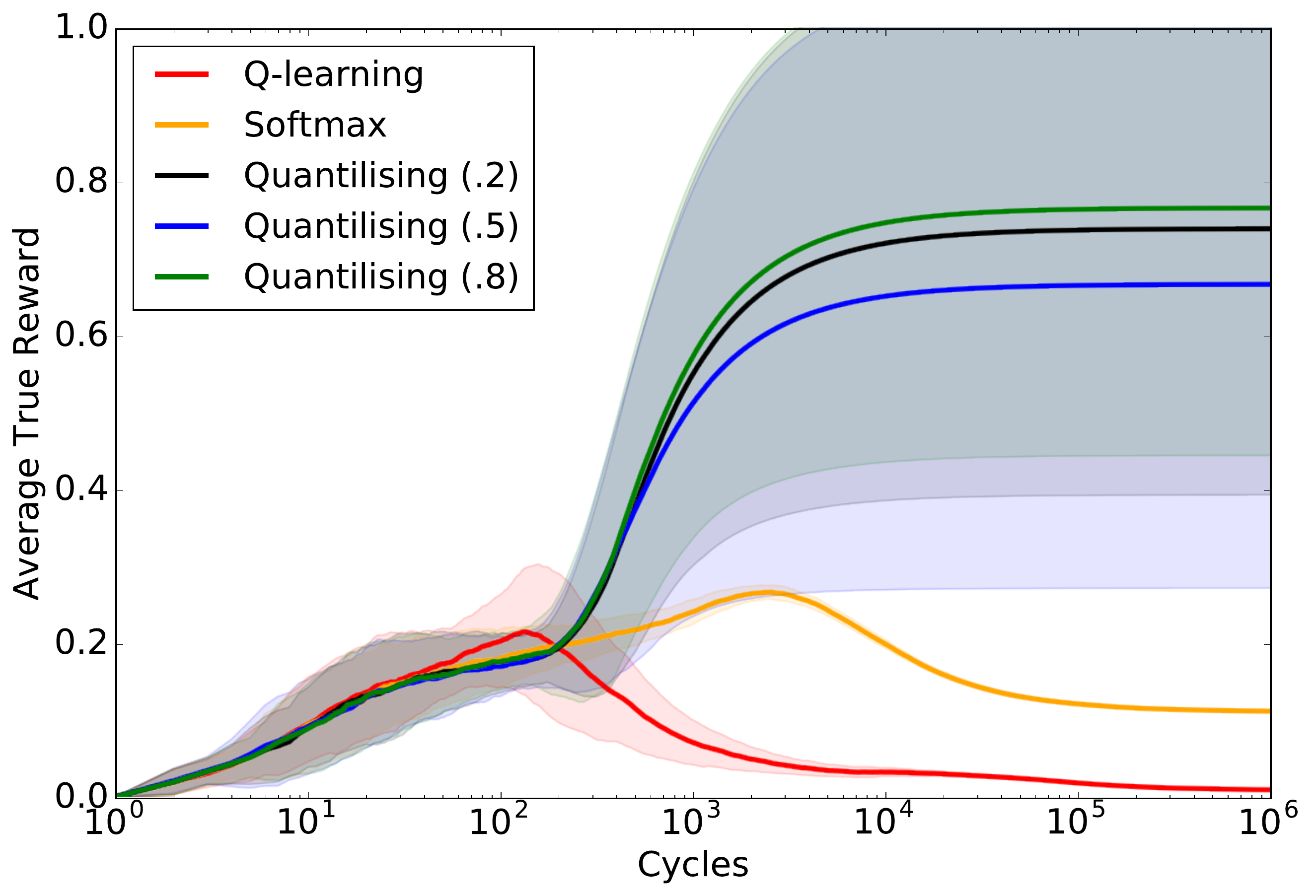}
\caption{True rewards for 4 goal tiles}
\end{subfigure}
\caption{Trajectories of average observed and true rewards for Q-learning, softmax and quantilising agents, showing mean $\pm$ standard deviation over 100 runs.
Q-learning and quantilising agents converge to a similar observed reward, but very different true rewards (much higher for the quantiliser with high variance). The value of $\delta$ that gives the highest true reward varies for different numbers of goal tiles.} \label{fig:plots}
\end{figure}

\begin{table}[ht]
\centering
\begin{tabular}{|c|c|c|c|}\hline
\textbf{goal tiles} & \textbf{agent}  &  \textbf{average observed reward} & \textbf{average true reward} \\\hline
\multirow{5}{*}{1} & Q-learning & $0.923 \pm 0.0003$ & $0.00852 \pm 0.00004$ \\
                   & Softmax Q-learning & $0.671 \pm 0.0005$ & $0.0347 \pm 0.00006$ \\
                   & Quantilising ($\delta=0.2$) & $0.838 \pm 0.15$ & $0.378 \pm 0.35$ \\
                   & Quantilising ($\delta=0.5$) & $0.943 \pm 0.12$ & $0.133 \pm 0.27$ \\
                   & Quantilising ($\delta=0.8$) & $0.979 \pm 0.076$ & $0.049 \pm 0.18$ \\\hline
\multirow{5}{*}{2} & Q-learning & $0.921 \pm 0.00062$ & $0.0309 \pm 0.0051$ \\
                   & Softmax Q-learning & $0.671 \pm 0.0004$ & $0.0738 \pm 0.0005$ \\
                   & Quantilising ($\delta=0.2$) & $0.934 \pm 0.047$ & $0.594 \pm 0.43$ \\
                   & Quantilising ($\delta=0.5$) & $0.931 \pm 0.046$ & $0.621 \pm 0.42$ \\
                   & Quantilising ($\delta=0.8$) & $0.944 \pm 0.05$ & $0.504 \pm 0.45$ \\\hline
\multirow{5}{*}{4} & Q-learning & $0.924 \pm 0.0002$ & $0.00919 \pm 0.00014$ \\
                   & Softmax Q-learning & $0.657 \pm 0.0004$ & $0.111 \pm 0.0006$ \\ 
                   & Quantilising ($\delta=0.2$) & $0.918 \pm 0.038$ & $0.738 \pm 0.35$ \\
                   & Quantilising ($\delta=0.5$) & $0.926 \pm 0.044$ & $0.666 \pm 0.39$ \\
                   & Quantilising ($\delta=0.8$) & $0.915 \pm 0.036$ & $0.765 \pm 0.32$ \\\hline
\end{tabular}
\caption{Average true and observed rewards after 1 million cycles, showing mean $\pm$ standard deviation over 100 runs. Q-learning achieves high observed reward but low true reward, and softmax achieves medium observed reward and a slightly higher true reward than Q-learning.
The quantilising agent achieves similar observed reward to Q-learning, but much higher true reward (with much more variance). Having more than 1 goal tile leads to a large improvement in true reward for the quantiliser, a small improvement for softmax, and no improvement for Q-learning.}
\label{tab:exp-results}
\end{table}

We demonstrate that RL agents like Q-learning and softmax Q-learning
cannot overcome corrupt reward (as discussed in Section
\ref{sec:problem}), while quantilisation helps overcome corrupt reward (as discussed in \cref{sec:quant}).
We run Q-learning with $\epsilon$-greedy ($\epsilon=0.1$), softmax with temperature $\beta=2$, and the quantilising agent with $\delta=0.2,0.5,0.8$ (where $0.8 =1-\sqrt{q/|\S|} = 1-\sqrt{1/25}$) for 100 runs with 1 million cycles. 
Average observed and true rewards after 1 million cycles are
shown in \cref{tab:exp-results}, and reward trajectories are shown
in \cref{fig:plots}.
Q-learning gets stuck on the corrupt tile and spend almost all the time there (getting observed reward around $1 \cdot (1-\epsilon)=0.9$), softmax spends most of its time on the corrupt tile,
while the quantilising agent often stays on one of the goal tiles.
%

\section{Conclusions}
\label{sec:conclusions}

This paper has studied the consequences of corrupt reward functions.
Reward functions may be corrupt due to bugs or misspecifications,
sensory errors, or because the agent finds a way to inappropriately
modify the reward mechanism.
Some examples were given in the introduction.
As agents become more competent at optimising their reward functions,
they will likely also become more competent at (ab)using reward corruption
to gain higher reward.
Reward corruption may impede the performance of a wide range of agents,
and may have disastrous consequences for highly intelligent agents
\citep{Bostrom2014}.

To formalise the corrupt reward problem, we extended
a Markov Decision Process (MDP) with a possibly corrupt reward function,
and defined a formal performance measure (regret).
This enabled the derivation of a number of formally precise
results for how seriously different agents were affected by
reward corruption in different setups (\Cref{tab:results}).
The results are all intuitively plausible,
which provides some support for the choice of formal model.

\begin{table}[ht]
  \centering
  \bgroup
  \def\arraystretch{1.2}
  \begin{tabular}{|c||c|c|c|c|c|}\hline
    \multirow{2}{*}{\textbf{Assumption}}&\multirow{2}{*}{No assumptions}&\multicolumn{4}{c|}{Assumption \ref{as:lim-cor} or \ref{as:lim-cor-df}, and \dots}{}\\
    \cline{3-6}
    &&  no other assumptions
    &  \cref{as:easy}
    &  CIRL
    &  SSRL/LVFS\\\hhline{|=|=|=|=|=|=|}
    \textbf{Result}
    & all agents fail
    & $\piquant$ weak bound
    & \begin{tabular}{c} $\pirl$, $\pidb$ fail\\ $\piquant$ succeeds \end{tabular}
    & $\pidb$ fails
    & $\pidb$ succeeds \\\hline
  \end{tabular}
  \egroup
  \caption{Main takeaways.
    Without additional assumptions, all agents fail (i.e., suffer high regret).
    Restricting the reward corruption with \cref{as:lim-cor} gives a weak bound for
    the quantilising agent.
    The $\pirl$ and $\pidb$ agents
    still fail even if we additionally
    assume many high reward states and agent control (\cref{as:easy}),
    but the quantilising agent $\piquant$ does well.
    In most realistic contexts,
    the true reward is learnable in spite of sensory corruption
    in SSRL and LVFS, but not in CIRL.
  }
  \label{tab:results}
\end{table}

The main takeaways from the results are:
\begin{itemize}
\item \emph{Without simplifying assumptions, no agent can avoid the corrupt
    reward problem} (\cref{th:impossibility}).
  This is effectively a No Free Lunch result, showing that unless some assumption
  is made about the reward corruption, no agent can outperform a random agent.
  Some natural simplifying assumptions to avoid the
  No Free Lunch result were suggested in \cref{sec:formal}.
\item \emph{Using the reward signal as evidence rather than optimisation
    target is no magic bullet, even under strong simplifying assumptions}
  (\cref{th:rl-imp1}).
  Essentially, this is because the agent does not know the exact
  relation between the observed reward (the ``evidence'') and the
  true reward.%
  \footnote{In situations where the exact relation is known,
    then a non-corrupt reward function can be defined.
    Our results are not relevant for this case.}
  However, when the data enables sufficient crosschecking of rewards,
  agents can avoid the corrupt reward problem (\cref{th:irf-learnability,th:cr-sublinear}).
  For example, in SSRL and LVFS this type of crosschecking is possible
  under natural assumptions.
  In RL, no crosschecking is possible, while CIRL is a borderline case.
  Combining frameworks and providing the agent with different sources
  of data may often be the safest option.
\item \emph{In cases where sufficient crosschecking of rewards is not possible,
    quantilisation may improve robustness} (\cref{th:quant,th:gen-quant}).
  Essentially, quantilisation prevents agents from overoptimising their objectives.
  How well quantilisation works depends on how the number of corrupt
  solutions compares to the number of good solutions.
\end{itemize}

The results indicate that while reward corruption constitutes a major problem
for traditional RL algorithms,
there are promising ways around it, both within the RL framework, and in
alternative frameworks such as CIRL, SSRL and LVFS.

\paragraph{Future work}
Finally, some interesting open questions are listed below:

\begin{itemize}
\item (Unobserved state)
  In both the RL and the decoupled RL models, the agent gets an accurate
  signal about which state it is in.
  What if the state is hidden?
  What if the signal informing the agent about its current state can
  be corrupt?
\item (Non-stationary corruption function)
  In this work, we tacitly assumed that both the reward and
  the corruption functions are stationary, and are always the same
  in the same state.
  What if the corruption function is non-stationary, and influenceable
  by the agent's actions? (such as if the agent builds a \emph{delusion box}
  around itself \citep{Ring2011})
\item (Infinite state space)
  Many of the results and arguments relied on there being a finite
  number of states.
  This makes learning easy, as the agent can visit every state.
  It also makes quantilisation easy, as there is a finite set of
  states/strategies to randomly sample from.
  What if there is an infinite number of states, and the agent has
  to generalise insights between states?
  What are the conditions on the observation graph for \cref{th:irf-learnability,th:cr-sublinear}?
  What is a good generalisation of the quantilising agent?
\item (Concrete CIRL condition)
  In \cref{ex:cirl-corruption}, we only heuristically inferred
  the observation graph from the CIRL problem description.
  Is there a general way of doing this?
  Or is there a direct formulation of the no-corruption condition in CIRL,
  analogous to \cref{th:irf-learnability,th:cr-sublinear}?
\item (Practical quantilising agent)
  As formulated in \cref{def:quant}, the quantilising agent $\piquant$
  is extremely inefficient with respect to data, memory, and computation.
  Meanwhile, many practical RL algorithms use randomness in various
  ways (e.g.\ $\eps$-greedy \citep{Sutton1998}).
  Is there a way to make an efficient quantilisation agent that retains
  the robustness guarantees?
\item (Dynamically adapting quantilising agent)
  In \cref{def:gen-quant}, the threshold $\delta$ is given as a parameter.
  Under what circumstances can we define a ``parameter free'' quantilising agent
  that adapts $\delta$ as it interacts with the environment?
\item (Decoupled RL quantilisation result)
  What if we use quantilisation in decoupled RL settings that
  nearly meet the conditions of \cref{th:irf-learnability,th:cr-sublinear}?
  Can we prove a stronger bound?
\end{itemize}

\section*{Acknowledgements}
Thanks to Jan Leike, Badri Vellambi, and Arie Slobbe for proofreading
and providing invaluable comments, and
to Jessica Taylor and Huon Porteous for good comments on quantilisation.
This work was in parts supported by ARC grant DP150104590.

\bibliographystyle{named}
\bibliography{cleanlib}

\begin{thebibliography}{}

\bibitem[\protect\citeauthoryear{Amodei and Clark}{2016}]{openai2016}
Dario Amodei and Jack Clark.
\newblock {Faulty Reward Functions in the Wild}.
\newblock \url{https://openai.com/blog/faulty-reward-functions/}, 2016.
\newblock Accessed: 2017-02-18.

\bibitem[\protect\citeauthoryear{Amodei \bgroup \em et al.\egroup
  }{2016}]{Amodei2016}
Dario Amodei, Chris Olah, Jacob Steinhardt, Paul Christiano, John Schulman, and
  Dan Man{\'{e}}.
\newblock {Concrete Problems in AI Safety}.
\newblock {\em CoRR}, 1606.06565, 2016.

\bibitem[\protect\citeauthoryear{Aslanides \bgroup \em et al.\egroup
  }{2017}]{Aslanides2017}
John Aslanides, Jan Leike, and Marcus Hutter.
\newblock Universal reinforcement learning algorithms: Survey and experiments.
\newblock In {\em IJCAI-17}. AAAI Press, 2017.

\bibitem[\protect\citeauthoryear{Berry and Fristedt}{1985}]{Berry1985}
Donald~A Berry and Bert Fristedt.
\newblock {\em {Bandit Problems: Sequential Allocation of Experiments}}.
\newblock Springer, 1985.

\bibitem[\protect\citeauthoryear{Bostrom}{2014}]{Bostrom2014}
Nick Bostrom.
\newblock {\em {Superintelligence: Paths, Dangers, Strategies}}.
\newblock Oxford University Press, 2014.

\bibitem[\protect\citeauthoryear{Evans \bgroup \em et al.\egroup
  }{2016}]{Evans2016}
Owain Evans, Andreas Stuhlmuller, and Noah~D Goodman.
\newblock {Learning the Preferences of Ignorant, Inconsistent Agents}.
\newblock In {\em AAAI-16}, 2016.

\bibitem[\protect\citeauthoryear{Hadfield-Menell \bgroup \em et al.\egroup
  }{2016}]{Hadfield-menell2016cirl}
Dylan Hadfield-Menell, Anca Dragan, Pieter Abbeel, and Stuart Russell.
\newblock {Cooperative Inverse Reinforcement Learning}.
\newblock {\em Advances in Neural Information Processing Systems (NIPS)}, 2016.

\bibitem[\protect\citeauthoryear{Hadfield-Menell \bgroup \em et al.\egroup
  }{2017}]{Hadfield-menell2016osg}
Dylan Hadfield-Menell, Anca Dragan, Pieter Abbeel, and Stuart Russell.
\newblock {The Off-Switch Game}.
\newblock In {\em {AAAI Workshop on AI, Ethics and Society}}, 2017.

\bibitem[\protect\citeauthoryear{Hutter}{2005}]{Hutter2005}
Marcus Hutter.
\newblock {\em {Universal Artificial Intelligence: Sequential Decisions based
  on Algorithmic Probability}}.
\newblock Lecture Notes in Artificial Intelligence (LNAI 2167). Springer, 2005.

\bibitem[\protect\citeauthoryear{Jaakkola \bgroup \em et al.\egroup
  }{1994}]{Jaakkola1994}
Tommi Jaakkola, Michael~I Jordan, and Satinder~P Singh.
\newblock {On the Convergence of Stochastic Iterative Dynamic Programming
  Algorithms}.
\newblock {\em Neural Computation}, 6(6):1185--1201, 1994.

\bibitem[\protect\citeauthoryear{Jaksch \bgroup \em et al.\egroup
  }{2010}]{Jaksch2010}
Thomas Jaksch, Ronald Ortner, and Peter Auer.
\newblock {Near-optimal Regret Bounds for Reinforcement Learning}.
\newblock {\em Journal of Machine Learning Research}, 11(1):1563--1600, 2010.

\bibitem[\protect\citeauthoryear{Kaelbling \bgroup \em et al.\egroup
  }{1998}]{Kaelbling1998}
Leslie~Pack Kaelbling, Michael~L. Littman, and Anthony~R. Cassandra.
\newblock {Planning and Acting in Partially Observable Stochastic Domains}.
\newblock {\em Artificial Intelligence}, 101(1-2):99--134, 1998.

\bibitem[\protect\citeauthoryear{Li}{1992}]{Li1992}
Ming Li.
\newblock {Average Case Complexity under the Universal Distribution Equals
  Worst Case Complexity}.
\newblock {\em Information Processing Letters}, 42(3):145--149, 1992.

\bibitem[\protect\citeauthoryear{Ng and Russell}{2000}]{Ng2000}
Andrew Ng and Stuart Russell.
\newblock {Algorithms for Inverse Reinforcement Learning}.
\newblock In {\em Proceedings of the Seventeenth International Conference on
  Machine Learning}, pages 663--670, 2000.

\bibitem[\protect\citeauthoryear{Puterman}{1994}]{Puterman1994}
M.L. Puterman.
\newblock {\em Markov Decision Processes: Discrete Stochastic Dynamic
  Programming}.
\newblock Wiley Series in Probability and Statistics. Wiley, 1994.

\bibitem[\protect\citeauthoryear{Riedl and Harrison}{2016}]{Riedl2016}
Mark~O Riedl and Brent Harrison.
\newblock {Using Stories to Teach Human Values to Artificial Agents}.
\newblock In {\em AAAI Workshop on AI, Ethics, and Society}, 2016.

\bibitem[\protect\citeauthoryear{Ring and Orseau}{2011}]{Ring2011}
Mark Ring and Laurent Orseau.
\newblock {Delusion, Survival, and Intelligent Agents}.
\newblock In {\em Artificial General Intelligence}, pages 11--20. Springer
  Berlin Heidelberg, 2011.

\bibitem[\protect\citeauthoryear{Sutton and Barto}{1998}]{Sutton1998}
Richard~S Sutton and Andrew~G Barto.
\newblock {\em {Reinforcement Learning: An Introduction}}.
\newblock MIT Press, 1998.

\bibitem[\protect\citeauthoryear{Taylor}{2016}]{Taylor2016a}
Jessica Taylor.
\newblock {Quantilizers: A Safer Alternative to Maximizers for Limited
  Optimization}.
\newblock In {\em {AAAI Workshop on AI, Ethics and Society}}, 2016.

\bibitem[\protect\citeauthoryear{Wolpert and Macready}{1997}]{Wolpert1997}
David~H Wolpert and William~G Macready.
\newblock {No Free Lunch Theorems for Optimization}.
\newblock {\em IEEE Transactions on Evolutionary Computation}, 1(1):270--283,
  1997.

\bibitem[\protect\citeauthoryear{Yampolskiy}{2014}]{Yampolskiy2014}
Roman~V. Yampolskiy.
\newblock {Utility Function Security in Artificially Intelligent Agents}.
\newblock {\em Journal of Experimental {\&} Theoretical Artificial
  Intelligence}, pages 373--389, 2014.

\end{thebibliography}

\end{document}